\def\0{{\mathbf 0}}
\def\1{{\mathbf 1}}
\def\e{{\mathbf e}}
\def\f{{\mathbf f}}
\def\h{{\mathbf h}}
\def\s{{\mathbf s}}
\def\v{{\mathbf v}}
\def\x{{\mathbf x}}
\def\y{{\mathbf y}}
\def\B{{\mathbf B}}
\def\D{{\mathbf D}}
\def\H{{\mathbf H}}
\def\I{{\mathbf I}}
\def\L{{\mathbf L}}
\def\M{{\mathbf M}}
\def\P{{\mathbf P}}
\def\S{{\mathbf S}}
\def\W{{\mathbf W}}
\def\ie{{\textit{i.e.}}}
\def\wrt{{\textit{w.r.t.}}}
\def\cA{{\mathcal A}}
\def\cE{{\mathcal E}}
\def\cG{{\mathcal G}}
\def\cL{{\mathcal L}}
\def\cN{{\mathcal N}}
\def\cO{{\mathcal O}}
\def\cS{{\mathcal S}}
\def\cU{{\mathcal U}}
\def\cV{{\mathcal V}}
\newcommand{\diag}[1]{\text{diag}\mspace{-3mu}\left(#1\right)}
\theoremstyle{plain}
\newtheorem{lemma}{Lemma}
\theoremstyle{definition}
\theoremstyle{definition}
\newcommand{\gdas}{BS-GDA}
\newcommand{\ourdataset}{MHSVD}
\newcommand{\ourdatasetinfull}{Multi-Highlight Short Video Dataset}
\newcommand{\blue}[1]{#1}
\newtheorem{theorem}{Theorem}
\newtheorem{corollary}{Corollary}
\begin{document}

\title{Graph Unfolding and Sampling for Transitory Video Keyframe Selection via Gershgorin Disc Alignment}

\author{Sadid Sahami, Gene Cheung,~\IEEEmembership{Fellow, IEEE}, and Chia-Wen Lin,~\IEEEmembership{Fellow, IEEE} \thanks{Manuscript resubmitted on December 19, 2025. Chia-Wen Lin acknowledges the support of the National Council of Science and Technology, Taiwan, under Grants NSTC 112-2634-F-002-005 and NSTC 112-2221-E-007-077-MY3. Gene Cheung acknowledges the support of the NSERC grant RGPIN-2025-06252.}
	\thanks{S. Sahami is with the Institute of Communications Engineering, National Tsing Hua University, Hsinchu, Taiwan. (e-mail:s.sahami@ieee.org)}\thanks{Chia-Wen Lin is with the Department of Electrical Engineering and the Institute of Communications Engineering, National Tsing Hua University, Hsinchu, Taiwan. (e-mail: cwlin@ee.nthu.edu.tw)}
	\thanks{Gene Cheung is with York University, Toronto, Canada. (e-mail: genec@yorku.ca)}}

\markboth{IEEE TRANSACTIONS ON IMAGE PROCESSING,~Vol.~14, No.~8, June~2026}{S. Sahami \MakeLowercase{\textit{et al.}}: Graph Unfolding and Sampling for Transitory Video Keyframe Selection via Gershgorin Disc Alignment}

\maketitle
\begin{abstract}
User-generated videos (UGVs) uploaded from mobile phones to social media sites like YouTube and TikTok are short and non-repetitive.
We summarize a transitory UGV into several keyframes in linear-time via fast graph sampling based on Gershgorin disc alignment (GDA).
Specifically, we first model a sequence of $N$ frames in a UGV as an $M$-hop path graph $\cG^o$ for $M \ll N$, where the similarity between two frames within $M$ time instants is encoded as a positive edge based on feature similarity.
Towards efficient sampling, we then ``unfold'' $\cG^o$ to a $1$-hop path graph $\cG$, specified by a generalized graph Laplacian matrix $\cL$, via one of two graph unfolding procedures with provable performance bounds.
We show that maximizing the smallest eigenvalue $\lambda_{\min}(\B)$ of a coefficient matrix $\B = \diag{\h} + \mu \cL$, where $\h$ is the binary keyframe selection vector, is equivalent to minimizing a worst-case signal reconstruction error.
We maximize instead the Gershgorin circle theorem (GCT) lower bound $\lambda^-_{\min}(\B)$ by choosing $\h$ via a new fast graph sampling algorithm that iteratively aligns left-ends of Gershgorin discs for all graph nodes (frames).
Experiments on multiple short video datasets show that our algorithm achieves comparable or better keyframe selection performance compared to state-of-the-art methods, at a substantially reduced complexity.
\end{abstract}

\begin{IEEEkeywords}\raggedright
	keyframe extraction, video summarization, graph signal processing, Gershgorin circle theorem
\end{IEEEkeywords}

\section{Introduction}
\label{sec:intro}

The widespread use of mobile phones equipped with high-resolution cameras has led to a dramatic increase in the volume of video content generated.
For instance, by February 2020, YouTube users were uploading 500 hours of video content every minute.
Moreover, many platforms such as YouTube Shorts, TikTok, and Instagram Reels are dedicated to short-form content, which is typically no longer than three minutes, fast-moving, and non-repetitive~\cite{violot2024shorts,zhang2023measurement}.
Consequently, there is a critical need for automatic video summarization schemes that can condense each ``transitory'' video into a few representative keyframes quickly (ideally in linear-time), facilitating downstream tasks such as selection, retrieval~\cite{hu2011survey}, and classification~\cite{brezeale2008automatic}.

Although keyframe extraction has been extensively studied, many existing algorithms suffer from high computational complexity.
For instance, \cite{mundur2006keyframebased} uses \textit{Delaunay triangulation} (DT) with a time complexity of $\mathcal{O}(N \log N)$ prior to its clustering procedure, where $N$ is the number of video frames.
Recent methods such as \cite{ma2022graph} achieve state-of-the-art (SOTA) performance using sparse dictionary selection, with significantly higher complexities.
Solving the optimization objective requires two nested iterative loops, each involving costly full matrix inversion, incurring $\mathcal{O}(N^3)$ complexity per iteration.

Keyframe extraction bears a strong resemblance to the \textit{graph sampling} problem in \textit{graph signal processing} (GSP)~\cite{ortega18ieee,cheung18}: 
given a finite graph with edge weights that encode similarities between connected node pairs, choose a node subset to collect samples, so that the reconstruction quality of an assumed smooth (or low-pass) graph signal is optimized.
In particular, a recent fast graph sampling scheme called \gdas{}~\cite{bai2020fast}, which aligns left-ends of \textit{Gershgorin discs} based on the well-known \textit{Gershgorin circle theorem} (GCT)~\cite{varga04} in linear algebra, achieves competitive performance while running in linear-time.
Leveraging \gdas, we propose an efficient keyframe extraction algorithm, \textit{the first in the literature to pose video summarization as a graph sampling problem}, while minimizing a global signal reconstruction error metric.

Specifically, we first construct an $M$-hop \textit{elaborated path graph} (EPG) $\cG^o$ for an $N$-frame video sequence, where $M \ll N$, and the weight $w^o_{i,j}$ of each edge $(i,j)$ connecting two nodes (frames) within $M$ time instants
is determined based on feature distance between the corresponding feature vectors, $\f_i$ and $\f_{j}$.
Then, to reduce sampling complexity, we ``unfold'' the $M$-hop EPG $\cG^o$ to a $1$-hop EPG $\cG$, possibly with self-loops, via one of two graph unfolding procedures with provable performance bounds.
Given a generalized graph Laplacian matrix $\cL$ specifying $\cG$, we show that maximizing the smallest eigenvalue $\lambda_{\min}(\B)$ of a coefficient matrix $\B = \diag{\h} + \mu \cL$, where $\h$ is the binary keyframe selection vector, is equivalent to minimizing a worst-case signal reconstruction error.

Instead, we maximize a lower bound $\lambda_{\min}^-(\B)$ based on GCT by choosing $\h$ via a new fast graph sampling algorithm that iteratively aligns left-ends of Gershgorin discs for all nodes (frames).
If $1$-hop EPG $\cG$ contains self-loops after unfolding, then we first align Gershgorin disc left-ends of $\cL$ via a similarity transform $\S \cL \S^{-1}$, leveraging a recent theorem called \textit{Gershgorin Disc Perfect Alignment} (GDPA) \cite{yang2021signed}. 
Experimental results show that our algorithm achieves comparable or better keyframe selection performance compared to SOTA methods~\cite{mei2014l2,cong2017adaptive,mei2021patcha,ma2022graph}, at a substantially reduced computation complexity.

This paper significantly extends our conference version~\cite{sahami2022fast} to improve video summarization performance:
\begin{enumerate}[label=(\roman*)]
\item \textbf{Graph Construction}: Unlike simple path graphs without self-loops used in~\cite{sahami2022fast}, we initialize an $M$-hop path graph for each transitory video, which is more informative in capturing pairwise similarities between frames.
\item \textbf{Graph Unfolding}: We introduce a new graph sparsification technique, \emph{graph unfolding}, which transforms the original dense $M$-hop EPG into a sparse 1-hop graph (possibly with self-loops) in \textit{linear-time}. 
  We show analytically that this transformation preserves a global regularization bound, enabling efficient processing while maintaining performance guarantees (see Section~\ref{subsec:unfolding}).
\item \textbf{A New Linear-Time Graph Sampling Algorithm:} Unlike \cite{bai2020fast}, which operates on general graphs, our newly developed sampling algorithm is tailored specifically for the unfolded 1-hop path graph and achieves $\mathcal{O}(N)$ runtime with provable guarantees (see Sec.~\ref{subsec:GS-Complexity}). We show that it consistently outperforms the general-graph method in~\cite{bai2020fast} in both efficiency and keyframe selection quality (Table~\ref{tab:ablation1}).
\item \textbf{New Dataset and Extended Experiments}: We conduct extensive summarization experiments on multiple short video datasets, including a newly created dataset specifically for keyframe selection of transitory videos.
\end{enumerate}

The remainder of this paper is organized as follows.
In Section~\ref{sec:relwork}, we review related works.
In Section~\ref{sec:prelim}, we introduce necessary mathematical notations, graph definitions, and GCT.
We describe our graph construction for transitory videos in Section~\ref{sec:graph}.
In Section~\ref{sec:gs_formulation}, we present our graph unfolding procedures and the graph sampling optimization formulation.
In Section~\ref{sec:gs_algorithm}, we detail our graph sampling algorithm applied to the unfolded $1$-hop path graph, considering both scenarios with and without self-loops.
Experimental results are discussed in Section~\ref{sec:results}. 
We conclude the paper in Section~\ref{sec:conclude}.

\section{Related Work}
\label{sec:relwork}

\subsection{Graph Sampling}
\label{subsec:GS}

Subset selection methods in the graph sampling literature select a subset of nodes to collect samples, so that smooth signal reconstruction quality can be maximized.
There exist deterministic and random techniques.
Random approaches~\cite{puy2018random, puy2018structured} select nodes based on assumed probabilistic distributions.
These methods are fast but require more samples to achieve similar signal reconstruction quality.
Deterministic methods~\cite{chen2015_sampling, anis2016efficient, wang2018aoptimal} extend Nyquist sampling to irregular data kernels described by graphs, where frequencies are defined by eigenvalues of a chosen (typically symmetric) graph variation operator, such as the graph Laplacian matrix~\cite{ortega2022introduction}.

Deterministic methods that require first computing eigen-decomposition of a variation operator to determine graph frequencies are computation-intensive and not scalable to large graphs.
To address this, \cite{wang2018aoptimal} employs Neumann series to mitigate eigen-decomposition, but the required large number of matrix multiplications is still expensive.
\cite{sakiyama2019edfree,akie2018eigenfree} circumvents eigen-decomposition by using Chebyshev polynomial approximation, but the method is ad-hoc in nature and lacks a global performance guarantee.
Leveraging GCT, \gdas{}~\cite{bai2020fast} maximizes a lower bound of the smallest eigenvalue $\lambda_{\min}^-(\B)$ of a coefficient matrix $\B = \diag{\h} + \mu \cL$---corresponding to a worst-case signal reconstruction error---by aligning Gershgorin disc left-ends.

In this paper, we redesign \gdas{} specifically for $1$-hop path graphs, creating a lightweight sampling algorithm that requires only node-by-node scalar computation to align disc left-ends.
Using graph unfolding, we then apply this to $M$-hop EPG graphs. Experiments show our new method outperforms previous eigen-decomposition-free methods~\cite{bai2020fast, sakiyama2019edfree}.

\vspace{-0.1in}
\subsection{Video Summarization}
\label{subsec:VS}

Video summarization can be broadly categorized into \textit{dynamic} and \textit{static} approaches which differ fundamentally~\cite{otani2022video}.
Dynamic video summarization~\cite{yalesong2015tvsuma,zhu2021dsnet,liu2022video,li2017general,hsu2023video,zhong2023semantic,zhu2022relational,zhang2024vssnet,yu2024unsupervised,ren2024timechat,zhu2022learning} generates a shorter video by selecting sub-shots based on independently pre-determined per-shot scores in  the knapsack problem setting.
In contrast, static video summarization~\cite{ma2022graph,mei2021patcha,sahami2022fast,dornaika2018instance,KUANAR20131212} selects representative keyframes while accounting for inter-frame correlations, \ie, selecting one frame would reduce the residual values of neighboring correlated frames, making them less likely to be selected next.
Thus, dynamic algorithms are not directly applicable to our keyframe selection benchmarks and fall outside our scope; \blue{we contrast the two evaluation protocols in Sec.~\ref{subsec:protocols}}.

\blue{The dynamic paradigm is today dominated by deep learning.
Supervised models---DPP-LSTM~\cite{zhang2016videob}, VASNet~\cite{fajtl2019summarizing}, MSVA~\cite{ghauri2021supervisedb}, SSPVS~\cite{li2023progressivea}, and CSTA~\cite{son2024cstaa}---regress per-frame importance scores from human annotations, while adversarial models (SUM-GAN~\cite{mahasseni2017unsupervised}, AC-SUM-GAN~\cite{apostolidis2021acsumgan}) learn them without labels; related attention-based frameworks, e.g., Huang~\&~Wang~\cite{huang2019novel}, pursue the same end.
Being shot-based, these are not native keyframe selectors; we nonetheless re-evaluate representative ones under our keyframe-selection protocol as cross-paradigm baselines (Table~\ref{tab:taxonomy}, Sec.~\ref{sec:results}).}

\blue{Among static approaches, we adopt the more practical \textit{unsupervised} setting, forgoing the keyframe annotation that supervised methods require.}
Further, we restrict our study to the single-modality setting, excluding video-to-text~\cite{jia2024queryoriented} and multimodal methods~\cite{xiao2020querybiased}.

\begin{table}[t]
\centering
\caption{\blue{Representative video-summarization methods by family, paradigm, and worst-case time complexity in the number of frames~$N$ and selected keyframes~$C$, where $C$ is fixed or scales as $C\!\propto\!N$ depending on the setting; supervision is marked U (unsupervised) or S (supervised). Our graph-sampling formulation is the only optimization-based keyframe selector with provably linear complexity.}}
\label{tab:taxonomy}
{\resizebox{\columnwidth}{!}{\begin{tabular}{clcl}
\toprule
Family & Method & Paradigm & Complexity \\
\midrule
\multirow{4}{*}{\rotatebox[origin=c]{90}{Cluster.}}
 & DT~\cite{mundur2006keyframebased}         & Static (U)  & $\mathcal{O}(N\log N)$ \\
 & STIMO~\cite{furini2009stimo}              & Static (U)  & $\mathcal{O}(CN)$ \\
 & VSUMM~\cite{deavila2011vsumm}             & Static (U)  & $\mathcal{O}(CN)$ \\
 & VISON~\cite{almeida2012vison}             & Static (U)  & $\mathcal{O}(N+C^2)$ (heuristic) \\
\midrule
\multirow{7}{*}{\rotatebox[origin=c]{90}{Dict.}}
 & SMRS~\cite{elhamifar2012see}              & Static (U)  & $\mathcal{O}(N^3)$ \\
 & SSDS~\cite{wang2017representative}        & Static (U)  & $\mathcal{O}(N^3\log N)$ \\
 & MSR~\cite{mei2015video}                   & Static (U)  & $\mathcal{O}(C^3N)$ \\
 & AGDS~\cite{cong2017adaptive}              & Static (U)  & $\mathcal{O}(C^2N^2)$ \\
 & NSMIS~\cite{dornaika2018instance}         & Static (U)  & $\mathcal{O}(N^3)$ \\
 & SBOMP~\cite{mei2021patcha}                & Static (U)  & $\mathcal{O}(CN^2 + C^3N)$ \\
 & GCSD~\cite{ma2022graph}                   & Static (U)  & $\mathcal{O}(TN^3)^{\sharp}$ \\
\midrule
\multirow{7}{*}{\rotatebox[origin=c]{90}{Deep}}
 & SUM-GAN~\cite{mahasseni2017unsupervised}  & Dynamic (U) & $\Omega(N)^{\dagger}$ \\
& AC-SUM-GAN~\cite{apostolidis2021acsumgan} & Dynamic (U) & $\Omega(N)^{\dagger}$ \\
 & DPP-LSTM~\cite{zhang2016videob}           & Dynamic (S) & $\Omega(N^2)^{\dagger}$ \\
 & VASNet~\cite{fajtl2019summarizing}        & Dynamic (S) & $\Omega(N^2)^{\dagger}$ \\
 & MSVA~\cite{ghauri2021supervisedb}         & Dynamic (S) & $\Omega(N^2)^{\dagger}$ \\
 & SSPVS~\cite{li2023progressivea}           & Dynamic (S) & $\Omega(N^2)^{\dagger}$ \\
 & CSTA~\cite{son2024cstaa}                  & Dynamic (S) & $\Omega(N)^{\dagger}$ \\
\midrule
\multirow{2}{*}{\rotatebox[origin=c]{90}{GSP}}
 & GDAVS~\cite{sahami2022fast}               & Static (U)  & $\mathcal{O}(ND^2)^{\natural}$ \\
 & \textbf{Ours}                             & Static (U)  & $\mathbf{\mathcal{O}(N)}$ \\
\bottomrule
\end{tabular}}
\vspace{2pt}
{\scriptsize\raggedright
$^{\sharp}$~GCSD nests two iterative loops, each enclosing a dense $N\times N$ matrix inversion ($\mathcal{O}(N^3)$); $T=T_{\mathrm{out}}T_{\mathrm{in}}$ is the product of their iteration counts, whose convergence is not characterized in~\cite{ma2022graph}---we therefore leave $T$ explicit rather than assert a bound.
$^{\natural}$~Conference version: $\mathcal{O}(N^2)$ without dynamic programming, reduced to $\mathcal{O}(ND^2)$ with a DP/memoization implementation ($D\ll N$, the maximum recursion depth). $^{\dagger}$~Deep models incur a separate (dataset-dependent) training cost; at inference they must score all $N$ frames, hence at least $\Omega(N)$, and $\Omega(N^2)$ for the attention- and DPP-based methods (full self-attention / a dense $N\times N$ kernel) whereas our algorithm is training-free.
\par}
}
\end{table}

The need for computation efficiency in video summarization has been recognized \cite{lu2017unsupervised, cong2012scalable}.
Previous efforts focusing on low-complexity fall under two categories.
The first includes clustering-based approaches such as Delaunay clustering \cite{mundur2006keyframebased,KUANAR20131212} and hierarchical clustering for scalability \cite{herranz2010framework}.
\cite{deavila2011vsumm} employs $k$-means clustering to select keyframes.
Furini et al. \cite{furini2009stimo} poses summarization as a metric $k$-center problem, a known NP-hard problem.
Given a fixed number of iterations, it operates within a polynomial complexity regime, and only recently an $\mathcal{O}(N\log N)$ implementation is developed \cite{tiwari2020banditpam}.
These methods typically exhibit super-linear computational complexity, escalating significantly when clustering iterations increase.
In contrast, our algorithm has linear-time complexity.

The second category relies on random selection or heuristic-based paradigms~\cite{almeida2012vison, deavila2011vsumm, furini2009stimo}.
Among them, VISON \cite{almeida2012vison} achieves $\mathcal{O}(N)$ complexity.
The algorithm employs a dissimilarity measure to detect similar consecutive frames and groups them based on a threshold, which is not easy to pre-set.

Optimization-based approaches for (unsupervised) static video summarization are studied recently with a focus on the dictionary selection paradigm~\cite{elhamifar2012see,zhao2014quasi,cong2012scalable,mei2014l2,elhamifar2016dissimilaritybased,liu2014diversified,cong2017adaptive,ma2019video,ma2020similarity,mei2021patcha,mei2015video,ma2022graph}.
Notably, \cite{cong2012scalable} and \cite{elhamifar2012see} pose video summarization as a sparse dictionary selection problem, aiming to identify a subset of keyframes for optimal video reconstruction.
\cite{mei2015video} incorporates the $\ell_0$-norm constraint in dictionary learning directly to promote a sparse solution. 
\cite{mei2021patcha} extends the sparse dictionary selection approach to include visual features for frame patches.
This concept is then extended to temporally adjacent frames, forming spatio-temporal block representations.
Subsequently, a greedy approach employing \textit{simultaneous block orthogonal matching pursuit} (SBOMP) is devised.

In \cite{ma2022graph}, graph convolutional dictionary selection with $\ell_{2,p}$ norm ($0 < p \leq 1$) is introduced, marking the first exploration of structured video information represented as a graph within the dictionary selection framework.
In contrast, we introduce graph sampling as a novel paradigm and design a linear-time sampling algorithm for $1$-hop path graphs based on \gdas.

\subsection{\blue{Evaluation Protocols}}
\label{subsec:protocols}

\blue{Keyframe selection (the static paradigm) and shot-based summarization (the dynamic paradigm) are evaluated under two distinct, non-interchangeable protocols.
The static paradigm uses the \textit{one-to-one keyframe-matching} protocol~\cite{deavila2011vsumm,mei2021patcha,ma2022graph}: each selected keyframe is matched to at most one ground-truth keyframe (a bipartite one-to-one matching), and accuracy is the F$_1$ score over the matched set.
The dynamic paradigm uses the \textit{keyshot frame-overlap} protocol~\cite{zhang2016videob}: predicted and ground-truth keyframes are expanded into temporal keyshots (capped at $15\%$ of the video length) and scored by an overlap-based F$_1$, or alternatively by rank correlation (Kendall's $\tau$ / Spearman's $\rho$)~\cite{otani2022video}.
This distinction stems from divergent underlying assumptions:
keyshot frame-overlap presumes a \emph{continuous} per-frame importance signal, whereas one-to-one keyframe matching presumes a \emph{sparse} set of discrete keyframes; a single method can be scored under both only if it produces both output types.
We adopt the one-to-one keyframe-matching protocol throughout, as it is native to the keyframe-selection task we address.}

\section{Preliminaries}
\label{sec:prelim}
\subsection{Graph Definitions}

A positive undirected graph $\cG(\cV,\cE,\W)$ is defined by a set of $N$ nodes $\cV = \{1, \ldots, N\}$, edge set $\cE$, and an \textit{adjacency matrix} $\W \in \mathbb{R}^{N \times N}$.
Edge set $\cE$ specifies edges $(i,j)$, each with weight $w_{i,j}$, and may contain self-loops $(i,i)$ with weight $w_{i,i}$.
Matrix $\W$ contains the weights of edges and self-loops, \ie, $W_{i,j} = w_{i,j} \in \mathbb{R}^+$.
Diagonal \textit{degree matrix} $\D$ has diagonal entries $D_{i,i} = \sum_{j} W_{i,j}, \forall i$. 
A \textit{combinatorial graph Laplacian matrix} $\L$ is defined as $\L \triangleq \D - \W$, which can be proven to be \textit{positive semi-definite} (PSD), \ie, all eigenvalues of $\L$ are non-negative \cite{cheung18}. 
To account for self-loops, the \textit{generalized graph Laplacian matrix} $\cL$ is $\cL \triangleq \D - \W + \diag{\W}$. 

A graph signal $\x \in \mathbb{R}^N$ is smooth with respect to graph $\cG$ if its \textit{graph Laplacian regularizer} (GLR) \cite{pang17} is small:
\begin{align}
\x^{\top} \cL \x = \sum_{(i,j) \in \cE} w_{i,j} (x_i - x_j)^2 + \sum_{i \in \cV} w_{i,i} x_i^2 .
\label{eq:glr}
\end{align}
GLR is commonly used to regularize ill-posed inverse problems such as denoising and dequantization \cite{pang17,liu17}.

\subsection{Gershgorin Circle Theorem (GCT)}
\label{subsec:GCT}

Given a real symmetric matrix $\M$, corresponding to each row $i$ is a \textit{Gershgorin disc} $i$ with center $c_i \triangleq M_{i,i}$ and radius $r_i \triangleq \sum_{j \neq i} |M_{i,j}|$. 
GCT \cite{horn2012matrix} states that each eigenvalue $\lambda$ resides in at least one Gershgorin disc, \ie, $\exists i$ such that 
\begin{align}
c_i - r_i \leq \lambda \leq c_i + r_i .
\label{eq:GCT}
\end{align}

A corollary of \eqref{eq:GCT} is that the smallest Gershgorin disc left-end $\lambda^-_{\min}(\M)$ is a lower bound for the smallest eigenvalue $\lambda_{\min}(\M)$ of $\M$, \ie,
\begin{align}
\lambda^-_{\min}(\M) \triangleq \min_i c_i - r_i \leq \lambda_{\min}(\M) .
\label{eq:GCTc}
\end{align}
A \textit{similarity transform}  $\S \M \S^{-1}$ for an invertible matrix $\S$ has the same set of eigenvalues as $\M$ \cite{horn2012matrix}.
Thus, a GCT lower bound for $\lambda_{\min}(\S \M \S^{-1})$ is also a lower bound for $\lambda_{\min}(\M)$, \ie, for any invertible $\S$, 
\begin{align}
\lambda^-_{\min}(\S \M \S^{-1}) \leq \lambda_{\min}(\M) .
\label{eq:GCT2}
\end{align}
Because the GCT eigenvalue lower bound of the original matrix $\M$ \eqref{eq:GCTc} is often loose, typically, a suitable similarity transform is first performed to obtain a tighter bound \eqref{eq:GCT2}. 
We discuss our choice of transform matrix $\S$ in the sequel. 
 
\section{Video Graph Construction}
\label{sec:graph}

We first describe an efficient method, with linear-time complexity, to construct a sparse path graph $\cG^o$ to represent a transitory video with $N$ frames, which will be used for graph sampling to select keyframes in Sections\;\ref{sec:gs_formulation} and \ref{sec:gs_algorithm}.

\subsection{Graph Connectivity}\label{subsec:GrConn}

A video $\cU$ is a sequence of $N$ consecutive frames $F_i$'s indexed in time by $i$, \ie, $\cU=\left\{F_i\right\}_{i=1}^N$.
To represent $\cU$, we initialize an undirected positive graph $\mathcal{G}^o(\mathcal{V}^o, \mathcal{E}^o, \W^o)$ with node and edge sets, $\cV^o$ and $\cE^o$ respectively, and adjacency matrix $\W^o \in \mathbb{R}^{N \times N}$. 
Each node $i \in \cV^o$ represents a frame $F_i$.
It is connected to a \textit{neighborhood} of (at most) $2M$ nodes $\cN^o_i$, representing frames within temporal distance $M$:

\vspace{-0.05in}
\begin{small}
\begin{equation}
\cN^o_i \triangleq \left\{\max(i-M, 1),\dots,i-1, i+1, \dots, \min(i+M,N) \right\}
\end{equation}
\end{small}\noindent
where $\cN^o_i \subseteq \cV^o$.
We call this graph an $M$-\textit{Elaborate} \textit{Path} \textit{Graph} or $M$-EPG for brevity (see Fig.\;\ref{fig:vsg2}(a)).

The parameter $M$ allows flexibility in specifying graph density.
For transitory videos with non-repetitive scene changes, a small $M$ (\ie, $M \ll N$) can capture the essential temporal inter-frame similarities.
Later in Section~\ref{subsec:ablation} (Table~\ref{tab:ablation1}), we empirically show the efficacy of this graph connectivity.

\subsection{Edge Weight Computation}
\label{subsec:edgecomputation}

Given chosen graph connectivity, we next compute edge weight $w_{i,j}^o$ for each connected pair $(i,j) \in \cE^o$. 
Associated with each frame $F_i$ (node $i$) is a \textit{feature vector} $\f_i \in \mathbb{R}^\ell$, which can be computed leveraging existing technologies \cite{radford2021learning,szegedy2015going}.
Our choice is CLIP (specifically, ViT-B/32)\cite{radford2021learning}.
\blue{We use it as a pretrained encoder, mapping each frame $F_i$ to feature vector $\f_i$ (frames are resized to $224 \times 224$ as input).}
\blue{Learned via image--text contrastive training, these vectors capture high-level semantics, so that similar frames $F_i$ are mapped to features $\f_i$ that are close in feature space, consistent with the smoothness assumption on our graph signal (Sec.~\ref{sec:results}).}

Specifically, to quantify the similarity between a frame pair, we compute an exponential kernel of the Euclidean distance between their respective features~\cite{radford2021learning}, where $\sigma$ controls the sensitivity to distance:
\begin{align}
	w_{i,j}^o = \exp\left(-\frac{\|\f_i - \f_j\|^2_2}{\sigma^2}\right).
  \label{eq:edgeWeight}
\end{align}

Note that alternative similarity metrics can also be considered to compute edge weights $w_{i,j}^o$.

\subsection{Graph Construction Complexity}\label{subsec:gccomplexity}

We show that the complexity of our graph construction method is linear with respect to the number of frames in the video.
\blue{Each frame is encoded independently by a fixed pretrained encoder, so generating feature vector $\f_i$ for frame $F_i$ incurs a cost independent of the number of frames $N$ (though, like any feature extractor, it scales with the frame resolution).}
Consequently, generating frame features for the entire video incurs a computational complexity of $\mathcal{O}(N)$, while graph construction of $M$-EPG has a complexity of $\mathcal{O}(MN\ell)$, where $\ell$ denotes the fixed dimensionality of feature vectors.
Thus, the overall computational complexity of graph construction remains $\mathcal{O}(N\ell)$ since $M \ll N$.

\begin{figure}
    \centering
    \begin{minipage}{0.5\columnwidth}
        \subfloat[a][$3$-EPG]{\includegraphics[width=\linewidth]{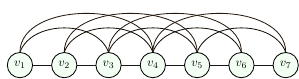}}
    \end{minipage}\begin{minipage}{0.5\columnwidth}
        \subfloat[b][$\beta=2$]{\includegraphics[width=\linewidth]{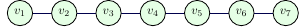}}\hfill
        \subfloat[c][$\beta=0$]{\includegraphics[width=\linewidth]{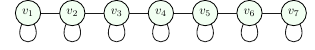}}
    \end{minipage}
	\vspace{-0.15in}
	\caption{Illustration of the $M$-EPG graph (a) and its unfolded versions: (b) Graph Unfolding 1 corresponding to $\beta=2$, and (c) Graph Unfolding 2 corresponding to $\beta=0$}
    \label{fig:vsg2}
\end{figure}

\section{Graph Sampling: Formulation}
\label{sec:gs_formulation}

We first develop methodologies to ``unfold'' an $M$-EPG to a sparser $1$-EPG for more efficient sampling in Section\;\ref{subsec:unfolding}.
We formulate a sampling objective for a $1$-EPG in Section\;\ref{subsec:sampling_obj}.
Finally, we develop sampling algorithms for $1$-EPGs without and with self-loops in Section\;\ref{subsec:sampling_no_selfloop} and \ref{subsec:sampling_selfLoops}, respectively.

\subsection{Graph Unfolding}
\label{subsec:unfolding}

We first discuss how we \textit{unfold} an original $M$-EPG $\cG^o$, specified by a graph Laplacian matrix $\L^o$, to a $1$-EPG $\cG$ [called \textit{simple path graph} (SPG)], possibly with self-loops, specified by a tri-diagonal generalized graph Laplacian matrix $\cL$.
SPGs are more amenable to fast graph sampling.
Specifically, we seek a Laplacian $\cL$ for an SPG $\cG$ such that
\begin{align}
\x^\top \L^o \x \leq \x^\top \cL \x, ~~~\forall \x \in \mathbb{R}^N .
\label{eq:GLR_bound}
\end{align}
where $\x^\top \L^o \x$ is a GLR \eqref{eq:glr} \cite{pang17} that quantifies the variation of signal $\x$ across a graph kernel specified by $\L^o$.

GLR has been used as a signal prior in graph signal restoration problems such as denoising, and dequantization \cite{pang17,liu17}.
Replacing $\x^\top \L^o \x$ with $\x^\top \cL \x$ means that, in a minimization problem, we are minimizing an \textit{upper bound} of GLR $\x^\top \L^o \x$.

\eqref{eq:GLR_bound} implies that $\cL - \L^o \succeq 0$ is PSD.
The next theorem ensures the PSDness of $\cL - \L^o$ when converting an edge $(i,j) \in \cE^o$ of positive weight $w^o_{i,j} > 0$ in the original $M$-EPG $\cG^o$ to possible edges $(i,k)$ and $(k,j)$ and self-loops at nodes $i$ and $j$ and intermediate node $k$ in the unfolded SPG $\cG$.

\begin{theorem}
Given a graph $\cG^o$ specified by generalized graph Laplacian $\L^o$, to replace an edge $(i,j)$ of weight $w^o_{i,j} > 0$ connecting nodes $i$ and $j$ in $\cG^o$, a procedure that adds edges $(i,k)$ and $(k,j)$ to/from intermediate node $k$ in graph $\cG$, each of weight $\beta w^o_{i,j}$, adds self-loops at nodes $i$ and $j$, each of weight $(2-\beta) w^o_{i,j}$, and adds self-loop at node $k$ of weight $(\beta^2 - 2\beta) w^o_{i,j}$, for $\beta \in \mathbb{R}$, results in generalized graph Laplacian $\cL$ for modified graph $\cG$ such that $\cL - \L^o$ is PSD.
\label{thm:graph_transform}
\end{theorem}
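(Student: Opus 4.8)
The plan is to establish $\cL - \L^o \succeq 0$ by showing directly that $\x^\top (\cL - \L^o) \x \geq 0$ for every $\x \in \mathbb{R}^N$, using the GLR identity \eqref{eq:glr}. The crucial observation is that the described procedure touches only the three nodes $i$, $j$, $k$ and the single edge $(i,j)$; every other edge and self-loop of $\cG^o$ is copied unchanged into $\cG$. Consequently the matrix $\cL - \L^o$ has support confined to rows and columns $i$, $j$, $k$, and its quadratic form reduces to the difference between the GLR contributions of the newly inserted elements and the GLR contribution $w^o_{i,j}(x_i - x_j)^2$ of the removed edge.

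First I would write out, via \eqref{eq:glr}, the contribution of each new element to $\x^\top \cL \x$: the two edges $(i,k)$ and $(k,j)$ contribute $\beta w^o_{i,j}(x_i - x_k)^2$ and $\beta w^o_{i,j}(x_k - x_j)^2$; the self-loops at $i$ and $j$ contribute $(2-\beta) w^o_{i,j} x_i^2$ and $(2-\beta) w^o_{i,j} x_j^2$; and the self-loop at $k$ contributes $(\beta^2 - 2\beta) w^o_{i,j} x_k^2$. Subtracting the old term $w^o_{i,j}(x_i - x_j)^2$ and factoring out the (strictly positive) weight $w^o_{i,j}$ leaves a purely quadratic polynomial in the three scalars $x_i, x_j, x_k$ whose coefficients I would collect term by term.

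The key step---and the one place where the specific weight choices earn their keep---is recognizing that this collected polynomial is a perfect square. After cancellation the $x_i^2$ and $x_j^2$ coefficients both reduce to $1$, the $x_k^2$ coefficient becomes $\beta^2$, the cross terms $x_i x_k$ and $x_j x_k$ each carry coefficient $-2\beta$, and the $x_i x_j$ cross term carries coefficient $+2$; these are precisely the expansion of $(x_i + x_j - \beta x_k)^2$. Hence $\x^\top (\cL - \L^o) \x = w^o_{i,j}\,(x_i + x_j - \beta x_k)^2 \geq 0$, which holds for all $\x$ and all real $\beta$, giving the claimed PSDness. I anticipate no genuine obstacle beyond this bookkeeping; the only subtlety is keeping the $(2-\beta)$ self-loop terms and the removed edge term straight so that the $x_i^2, x_j^2$ coefficients collapse correctly, and verifying that the self-loop weight $(\beta^2 - 2\beta) w^o_{i,j}$ at node $k$ supplies exactly the $\beta^2$ needed to complete the square.
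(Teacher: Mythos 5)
Your proof is correct and takes essentially the same approach as the paper: the paper's proof starts from $0 \leq (x_i - \beta x_k + x_j)^2$ and rearranges it into an upper bound on $(x_i - x_j)^2$, which is precisely your identity $\x^\top(\cL - \L^o)\x = w^o_{i,j}\,(x_i + x_j - \beta x_k)^2$ read in reverse. Your bookkeeping of the coefficients checks out, so there is no gap.
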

See Fig.\;\ref{fig:theorem1} for an illustration of an edge $(i,j)$ in original graph $\cG^o$ replaced by edges $(i,j-1)$ and $(j-1,j)$ and self-loops at nodes $i$, $j-1$ and $j$, each with different weights.

\begin{proof}
GLR $\x^\top \cL \x$ for generalized Laplacian $\cL$ of graph $\cG$ with edge weights $w_{i,j}$'s and self-loop weights $u_i$'s, can be written as a weighted sum of connected sample pair difference squares $(x_i - x_j)^2$ and sample energies $x_i^2$ \eqref{eq:glr}:
\begin{align}
\x^\top \cL \x = \sum_{(i,j) \in \cE} w_{i,j} (x_i - x_j)^2 + \sum_{i \in \cV} u_i x_i^2 .
\end{align}

Considering only edge $(i,j) \in \cE^o$ in graph $\cG^o$, its contribution to GLR $\x^\top \L^o \x$ is
\begin{align}
w^o_{i,j} (x_i - x_j)^2 = w^o_{i,j} (x_i^2 - 2 x_i x_j + x_j^2) .
\end{align}

Algebraically, starting from $0 \leq y^2, ~\forall y \in \mathbb{R}$, we write
\begin{align}
0 \leq (a - \beta b + c)^2 &= a^2 - 2 \beta ab + 2ac + \beta^2 b^2 - 2 \beta bc + c^2
\nonumber \\
-2 ac & \leq \beta (a-b)^2 + \beta (b-c)^2 
\nonumber \\
& ~~ + (1-\beta) a^2 + (\beta^2 - 2\beta) b^2 + (1-\beta) c^2 .
\nonumber 
\end{align}
Letting $a = x_i$, $c = x_j$ and $b = x_k$ for some intermediate node $k$, we can now upper-bound $(x_i - x_j)^2$ as
\begin{align}
x_i^2 - 2 x_i x_j + x_j^2 &\leq \beta (x_i - x_k)^2 + \beta (x_k - x_j)^2 
\nonumber \\
& ~~ + (2-\beta) x_i^2 + (\beta^2 - 2\beta) x_k^2 + (2-\beta) x_j^2
\nonumber 
\end{align}

Thus, replacing edge $(i,j)$ of positive weight $w^o_{i,j}$ in $\cG^o$ with edges $(i,k)$ and $(k,j)$ to/from intermediate node $k$, each with weight $\beta w^o_{i,j}$, and self-loops at nodes $i$, $k$ and $j$ with weights $(2-\beta) w^o_{i,j}$, $(\beta^2 - 2\beta) w^o_{i,j}$ and $(2-\beta) w^o_{i,j}$ in $\cG$, means that $\x^\top \L^o \x \leq \x^\top \cL \x, \forall \x$, or $\cL - \L^o \succeq 0$.
\end{proof}

\begin{figure}
\begin{center}
	\includegraphics[width=0.9\linewidth]{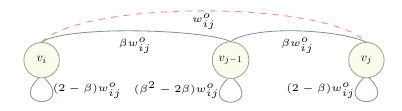}
\end{center}
\vspace{-0.2in}
\caption{Graphical representation of Theorem~\ref{thm:graph_transform}: Replacing a (dashed red) edge $(i,j)$ with weight $w^o_{i,j}$ in the original graph $\cG^o$ with (solid green) edges $(i,j-1)$ and $(j-1,j)$ and self-loops at nodes $i$, $j-1$ and $j$ of different weights, where $j-1$ is an intermediate node. }
\label{fig:theorem1}
\end{figure}

Given Theorem\;\ref{thm:graph_transform}, we deduce two practical corollaries.
The first corollary follows from Theorem\;\ref{thm:graph_transform} for $\beta = 2$.
\begin{corollary}
Given a graph $\cG^o$ specified by generalized graph Laplacian $\L^o$, to replace an edge $(i,j)$ connecting nodes $i$ and $j$ in $\cG^o$ of weight $w^o_{i,j} > 0$, a procedure that adds edges $(i,k)$ and $(k,j)$ to/from intermediate node $k$ in graph $\cG$, each of weight $2 w^o_{i,j}$, results in generalized graph Laplacian $\cL$ such that $\cL - \L^o$ is PSD.
\label{corollary:newEdges}
\end{corollary}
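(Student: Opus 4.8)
The plan is to obtain \autoref{corollary:newEdges} as the $\beta = 2$ specialization of \autoref{thm:graph_transform}, since the theorem already establishes PSDness of $\cL - \L^o$ for arbitrary $\beta \in \mathbb{R}$. First I would simply substitute $\beta = 2$ into the three families of weights prescribed by the theorem and verify that the self-loop contributions vanish, leaving only the two new edges $(i,k)$ and $(k,j)$. Concretely, the edges $(i,k)$ and $(k,j)$ each receive weight $\beta w^o_{i,j} = 2 w^o_{i,j}$; the self-loops at nodes $i$ and $j$ each receive weight $(2-\beta) w^o_{i,j} = 0$; and the self-loop at intermediate node $k$ receives weight $(\beta^2 - 2\beta) w^o_{i,j} = (4 - 4) w^o_{i,j} = 0$. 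Thus all self-loops drop out and the replacement is purely edge-to-edge.

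Having made this substitution, the conclusion is immediate: \autoref{thm:graph_transform} guarantees that the resulting generalized graph Laplacian $\cL$ satisfies $\cL - \L^o \succeq 0$ for every $\beta$, so in particular it holds at $\beta = 2$. The only thing worth spelling out is the underlying algebraic identity that powers the theorem, restricted to $\beta = 2$: with $a = x_i$, $b = x_k$, $c = x_j$, the inequality $0 \leq (a - 2b + c)^2$ rearranges to
\begin{align}
(x_i - x_j)^2 \leq 2 (x_i - x_k)^2 + 2 (x_k - x_j)^2 ,
\nonumber
\end{align}
which is exactly the per-edge GLR upper bound with no residual $x_i^2$, $x_j^2$, or $x_k^2$ self-energy terms. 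Summing $w^o_{i,j}$ times this inequality over all edges $(i,j) \in \cE^o$ reproduces $\x^\top \L^o \x \leq \x^\top \cL \x$ for all $\x$, hence $\cL - \L^o \succeq 0$.

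There is essentially no obstacle here, as the statement is a direct corollary; the only mild subtlety I would flag is checking that the prescribed intermediate node $k$ can be chosen consistently so that the collection of added edges genuinely assembles into a $1$-EPG (i.e.\ that the per-edge replacements, when summed over $\cE^o$, yield a valid tri-diagonal $\cL$ with the claimed edge weights and no spurious self-loops). Since each original edge contributes only off-diagonal mass at $(i,k)$ and $(k,j)$ and zero diagonal correction, the aggregated $\cL - \L^o$ is a sum of per-edge PSD terms, so PSDness is preserved under summation and the corollary follows. I would therefore keep the proof to a single sentence invoking \autoref{thm:graph_transform} at $\beta = 2$ and noting the vanishing of all self-loop weights.
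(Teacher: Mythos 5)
Your proposal is correct and matches the paper exactly: the paper states Corollary~\ref{corollary:newEdges} as the $\beta = 2$ instantiation of Theorem~\ref{thm:graph_transform}, with no further argument needed since the self-loop weights $(2-\beta)w^o_{i,j}$ and $(\beta^2 - 2\beta)w^o_{i,j}$ both vanish at $\beta = 2$. Your additional verification of the underlying inequality $(x_i - x_j)^2 \leq 2(x_i - x_k)^2 + 2(x_k - x_j)^2$ and of the per-edge summation is sound but not required beyond the one-line invocation.
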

The second corollary follows from Theorem\;\ref{thm:graph_transform} for $\beta = 0$.
\begin{corollary}
Given a graph $\cG^o$ specified by generalized graph Laplacian $\L^o$, to replace an edge $(i,j)$ connecting nodes $i$ and $j$ in $\cG^o$ with weight $w^o_{i,j} > 0$, a procedure that adds self-loops at nodes $i$ and $j$ in graph $\cG$, each of weight $2 w^o_{i,j}$, results in generalized graph Laplacian $\cL$ for modified graph $\cG$ such that $\cL - \L^o$ is PSD.
\label{corollary:self-loops}
\end{corollary}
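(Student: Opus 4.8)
The plan is to obtain Corollary~\ref{corollary:self-loops} as the special case $\beta = 0$ of Theorem~\ref{thm:graph_transform}, so the entire argument reduces to substituting this value into the already-established weight formulas. First I would record what each component of the Theorem~\ref{thm:graph_transform} construction becomes at $\beta = 0$: the two transported edges $(i,k)$ and $(k,j)$ carry weight $\beta w^o_{i,j} = 0$ and therefore vanish, the self-loop at the intermediate node $k$ carries weight $(\beta^2 - 2\beta) w^o_{i,j} = 0$ and likewise vanishes, while the self-loops at the endpoints $i$ and $j$ each carry weight $(2-\beta) w^o_{i,j} = 2 w^o_{i,j}$. This is exactly the construction asserted in the corollary, namely adding a self-loop of weight $2 w^o_{i,j}$ at each of $i$ and $j$ and nothing else. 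The PSD conclusion $\cL - \L^o \succeq 0$ is then inherited verbatim from Theorem~\ref{thm:graph_transform}.

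For a self-contained check I would also specialize the key inequality inside the proof of Theorem~\ref{thm:graph_transform} at $\beta = 0$, which collapses the per-edge bound to
\begin{align}
x_i^2 - 2 x_i x_j + x_j^2 \leq 2 x_i^2 + 2 x_j^2 ,
\nonumber
\end{align}
equivalently $0 \leq (x_i + x_j)^2$, valid for all real $x_i, x_j$. Summing this bound over every edge $(i,j) \in \cE^o$, i.e.\ replacing each difference-square term $w^o_{i,j}(x_i-x_j)^2$ by the endpoint energies $2 w^o_{i,j} x_i^2 + 2 w^o_{i,j} x_j^2$, reconstructs the GLR of the modified graph $\cL$ and yields $\x^\top \L^o \x \leq \x^\top \cL \x$ for every $\x \in \mathbb{R}^N$, which is precisely $\cL - \L^o \succeq 0$ by~\eqref{eq:GLR_bound}.

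Because this is a direct specialization, I do not anticipate any genuine obstacle. The only point meriting care is verifying that $\beta = 0$ simultaneously annihilates both the transported edges and the self-loop at the intermediate node $k$; if either survived, the reduced construction would no longer match the corollary. I would therefore confirm the three weight expressions explicitly before invoking Theorem~\ref{thm:graph_transform}.
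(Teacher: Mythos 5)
Your proposal is correct and takes essentially the same route as the paper: the paper also obtains Corollary~\ref{corollary:self-loops} by specializing Theorem~\ref{thm:graph_transform} to $\beta = 0$, under which the transported edges $(i,k)$, $(k,j)$ and the intermediate self-loop at $k$ all acquire zero weight and only the endpoint self-loops of weight $2 w^o_{i,j}$ survive. Your supplementary per-edge check, which reduces the theorem's inequality at $\beta = 0$ to $0 \leq (x_i + x_j)^2$ and sums over edges to get $\x^\top \L^o \x \leq \x^\top \cL \x$, is a valid self-contained verification consistent with the theorem's proof.
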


The two corollaries represent two extreme instantiations of Theorem\;\ref{thm:graph_transform}: the replacement procedure requires only adding edges/self-loops, respectively.
They also lead to two graph unfolding procedures of different complexities to transform an $M$-hop EPG $\cG^o$ to an SPG $\cG$:
\vspace{0.05in}
\\
\textbf{Graph Unfolding 1}: 
to replace each positive edge $(i,j)$ in $\cG^o$ with weight $w^o_{i,j}$ further than one hop, add edges $(i,j-1)$ and $(j-1,j)$ in $\cG$ with weight $w_{i,j-1} = w_{j-1,j} = 2 w^o_{i,j}$.
\\
\textbf{Graph Unfolding 2}: 
to replace each positive edge $(i,j)$ in $\cG^o$ with weight $w^o_{i,j}$ further than one hop, add self-loops at nodes $i$ and $j$ in $\cG$ with weights $u_i = u_j = 2 w^o_{i,j}$.
\vspace{0.05in}

Specifically, in the first unfolding procedure, each unfolding of an edge $(i,j)$ in $\cG^o$ requires $j-1$ replacement steps towards an SPG, thus a higher procedural complexity.

\subsection{Defining Graph Sampling Objective}
\label{subsec:sampling_obj}

Given an unfolded SPG $\cG$ specified by generalized Laplacian $\cL$ (see Fig.\;\ref{fig:vsg2}(b) for an illustration), we now aim to select $C$ representative sample nodes (frames) from $\cG$, where $C<N$.
To define a sampling objective, we first derive the worst-case reconstruction error of signal $\x \in \mathbb{R}^N$ given $C$ chosen samples.

Using \textit{sampling matrix} $\H \in \left\{0,1\right\}^{C \times N}$ defined as
\begin{align}
H_{i,j} = \left\{ \begin{array}{ll} 
1 & \mbox{if node $j$ is the $i$-th sample} \\
0 & \mbox{o.w.}
\end{array} \right. ,
\end{align}
one can choose $C$ samples from signal $\x \in \mathbb{R}^N$ as $\y = \H \x$.
To recover original signal $\x$ given observed samples $\y \in \mathbb{R}^C$, we employ GLR \eqref{eq:glr} as a signal prior \cite{pang17}, resulting in the following restoration problem:
\begin{align}
\min_{\x} \|\y - \H \x \|^2_2 + \mu \, \x^{\top} \cL \x
\label{eq:reconObj}
\end{align}
where $\mu > 0$ is a weight parameter that balances between the data fidelity and prior terms.
Given \eqref{eq:reconObj} is quadratic, convex, and differentiable, its solution $\x^*$ can be obtained by solving a system of linear equations:
\begin{align}
(\H^\top \H + \mu \cL) \x^* = \H^\top \y .
\end{align} 
The coefficient matrix $\B \triangleq \H^{\top} \H + \mu \cL$ is provably positive definite (PD) and thus invertible\footnote{\blue{This is a generalization of the positive-definiteness result in \cite{bai2020fast}, where a combinatorial Laplacian for a positive graph without self-loops is considered. 
Here, we consider a generalized Laplacian for a positive connected graph possibly with self-loops, whose first eigenvector is strictly positive \cite{yang2021signed}.}}, given $\cL$ is a generalized graph Laplacian for a positive connected graph.
\begin{lemma}
Coefficient matrix $\B = \H^\top \H + \mu \cL$, where $\H \in \{0,1\}^{C \times N}$ is a sampling matrix, $\cL$ is a \blue{generalized  Laplacian for} a positive connected graph possibly with self-loops, and $\mu > 0$, is PD for $C \geq 1$.
\end{lemma}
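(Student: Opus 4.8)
The plan is to prove positive definiteness directly from the quadratic form, first establishing that $\B$ is positive semi-definite and then upgrading the inequality to strict. For any $\x \in \mathbb{R}^N$ I would write
$$\x^\top \B \x = \x^\top \H^\top \H \x + \mu\, \x^\top \cL \x = \|\H \x\|_2^2 + \mu\, \x^\top \cL \x.$$
The first term is manifestly nonnegative. For the second, I would invoke the GLR decomposition \eqref{eq:glr}, namely $\x^\top \cL \x = \sum_{(i,j) \in \cE} w_{i,j}(x_i - x_j)^2 + \sum_{i \in \cV} u_i x_i^2$; since the graph is positive, every edge weight $w_{i,j}$ and every self-loop weight $u_i$ is nonnegative, so $\x^\top \cL \x \geq 0$ and hence $\x^\top \B \x \geq 0$, i.e., $\B \succeq 0$.

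To obtain strict positivity, I would argue by contradiction: suppose $\x^\top \B \x = 0$ for some $\x$. Because $\B$ is a sum of two PSD contributions, each must vanish separately, giving (i) $\|\H \x\|_2^2 = 0$ and (ii) $\x^\top \cL \x = 0$. Condition (ii), together with the decomposition above and the nonnegativity of all summands, forces $w_{i,j}(x_i - x_j)^2 = 0$ on every edge. Since the graph is connected with strictly positive edge weights, this implies $x_i = x_j$ across every edge, so $\x$ is constant on the single connected component, $\x = a\1$ for some $a \in \mathbb{R}$. Condition (i) means $\H \x = \0$; since $C \geq 1$, at least one node $j$ is sampled, so the corresponding row of $\H$ forces $x_j = a = 0$, whence $\x = \0$. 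Thus $\x^\top \B \x = 0$ only for $\x = \0$, which is precisely positive definiteness.

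The main obstacle—and the place where the generalized-Laplacian, self-loop setting departs from the simpler combinatorial-Laplacian lemma of \cite{bai2020fast}—is justifying that $\x^\top \cL \x = 0$ pins $\x$ down to a constant vector. The self-loop terms $u_i x_i^2$ only reinforce this conclusion (any node with $u_i > 0$ already forces $x_i = 0$), but the decisive leverage comes from connectivity of the positive graph through the difference-square terms; the single sampled node then collapses the remaining constant degree of freedom. I would make the step ``each PSD summand vanishes'' rigorous by noting that a finite sum of nonnegative reals equals zero iff each summand is zero, and I would use connectivity to chain the equalities $x_i = x_j$ along a spanning set of edges.
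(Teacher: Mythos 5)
Your proof is correct, but it takes a genuinely different route from the paper's. The paper also reduces positive definiteness to showing that no vector simultaneously annihilates both quadratic forms $\v^\top \H^\top \H \v$ and $\v^\top \cL \v$, but it settles this \emph{spectrally}: invoking Lemma 1 of \cite{yang2021signed} (a Perron--Frobenius argument), the smallest eigenvalue of $\cL$ for a positive connected graph has a lone, strictly positive eigenvector $\v_1$; hence any $\v$ with $\v^\top \cL \v = 0$ must be a multiple of $\v_1$, and $\v_1^\top \H^\top \H \v_1 = \sum_{i \in \cS} v_{1,i}^2 > 0$ once at least one node is sampled, so no annihilating vector exists. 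You instead work directly with the GLR decomposition \eqref{eq:glr}: nonnegativity of all summands gives PSD-ness, vanishing of the edge terms plus connectivity forces $\x = a\1$, and either a self-loop term or the sampled node forces $a = 0$. Your argument is more elementary and self-contained---it needs nothing beyond \eqref{eq:glr} and connectivity, avoids the external Perron--Frobenius lemma, and makes explicit the null set of $\x \mapsto \x^\top \cL \x$ (constant vectors when there are no self-loops, only $\0$ otherwise). What the paper's spectral route buys is uniformity with machinery it needs anyway: the strictly positive first eigenvector $\v_1$ reappears in Section\;\ref{subsec:sampling_selfLoops}, where GDPA uses it to align disc left-ends, so invoking it here keeps the exposition consistent. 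One cosmetic remark: your ``argument by contradiction'' is really a direct proof---you show $\x^\top \B \x = 0$ implies $\x = \0$, which together with PSD-ness is precisely positive definiteness.
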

\begin{proof}
Note first that both $\H^\top \H$ and $\cL$ are PSD (generalized Laplacian $\cL$ for a positive graph $\cG$, with or without self-loops, is provably PSD via GCT \cite{cheung18}).
Thus, $\H^\top \H + \mu \cL$ is PD iff there does not exist a vector $\v$ such that $\v^\top \H^\top \H \v$ and $\v^\top \cL \v$ are simultaneously $0$.
By Lemma 1 in \cite{yang2021signed}, the lone first eigenvector $\v_1$ corresponding to the smallest eigenvalue $\lambda_1 \geq 0$ of a generalized Laplacian $\cL$ for a positive connected graph $\cG$ is \textit{strictly positive} (proven via the Perron-Frobenius Theorem), \ie, $v_{1,i} > 0, \forall i$.
Thus, $\v_1^\top \H^\top \H \v_1 = \sum_{i \in \cS} v_{1,i}^2 > 0$, where $\cS$ is the non-empty index set for the chosen $C \geq 1$ sample nodes.
So while $\v_1^\top \cL \v_1 = 0$ if $\lambda_1 = 0$, $\v_1^\top \H^\top \H \v_1 > 0$, and hence there are no vectors $\v$ such that $\v^\top \H^\top \H \v = \v^\top \cL \v = 0$.
Therefore, $\B = \H^\top \H + \mu \cL$ is PD for $\mu > 0$.
\end{proof}
Note that $\H^{\top} \H$ is an $N \times N$ diagonal matrix, whose diagonal entries corresponding to $C$ selected nodes are $1$, and $0$ otherwise.
For convenience, we define $\h \triangleq \{0,1\}^N$ as a 0-1 vector of length $N$, and $\H^\top \H = \diag{\h}$.

One can show that maximizing the smallest eigenvalue $\lambda_{\min}(\B)$ of coefficient matrix $\B$---known as the \textit{E-optimality criterion} in system design~\cite{ehrenfeld1955efficiency}---is equivalent to minimizing a worst-case reconstruction error (Proposition 1 in \cite{bai2020fast}).
Given a sampling budget $C$, our sampling objective is thus to maximize $\lambda_{\min}(\B)$ using $\h$, \ie, 
\begin{align}
\max_{\h} ~ \lambda_{\min}(\diag{\h} + \mu \cL),
~~~ \mbox{s.t.}~ \|\h \|_1 \leq C .
\label{eq:sample_obj0}
\end{align}

For ease of computation, instead of $\lambda_{\min}(\B)$ we maximize instead the \textit{GCT lower bound} $\lambda^-_{\min}(\S \B \S^{-1})$ of the similarity transform $\S \B \S^{-1}$ of matrix $\B$ (with same set of eigenvalues as $\B$), where $\lambda^-_{\min}(\S \B \S^{-1}) \leq \lambda_{\min}(\B)$, \ie,
\begin{align}
\max_{\h, \S} ~ \lambda^-_{\min} \left( \S (\diag{\h} + \mu \cL)  \S^{-1} \right), 
~~\mbox{s.t.}~\|\h\|_1 \leq C 
\label{eq:sample_obj1}
\end{align}
where $\S$ is an invertible matrix.
In essence, \eqref{eq:sample_obj1} seeks to maximize the smallest Gershgorin disc left-end of matrix $\S \B \S^{-1}$ using $\h$ and $\S$, under a sampling budget $C$.
For simplicity, we consider only positive diagonal matrices for $\S$, \ie, $\S = \diag{\s}$ and $s_i > 0, \forall i$.

Note that our objective \eqref{eq:sample_obj1} differs from \cite{bai2020fast} in two respects.
First, $\cL$ in \eqref{eq:sample_obj1} is a generalized graph Laplacian for a positive graph possibly \textit{with} self-loops, while \cite{bai2020fast} considers narrowly a positive graph \textit{without} self-loops.
Second, $\cL$ corresponds to an SPG $\cG$ and thus is tri-diagonal, while the combinatorial Laplacian in \cite{bai2020fast} corresponds to a more general graph with unstructured node-to-node connectivity.

We remark briefly on the interpretation of the target signal and on alternative smoothness priors.
\blue{The graph is built solely from feature similarity, with edge weights given by \eqref{eq:edgeWeight}---so that frames with similar content are strongly connected.
	Our formulation relies only on the target signal being \textit{smooth} over this graph---a mild assumption, since adjacent frames with similar content vary slowly.}
An alternative is to assume $\x$ is $\omega$-\textit{bandlimited} and pursue \textit{A-optimal sampling} (minimize average reconstruction error), which requires costly matrix inverse and eigen-decomposition~\cite{wang2018aoptimal,wang2023fast}.
In contrast, the GLR prior yields a sparse, symmetric positive-definite normal matrix $\B$, admits fast linear-time solvers for signal reconstruction, and leads to an E-optimal sampling objective (maximize the smallest eigenvalue) that is easier to optimize.
 
\section{GRAPH Sampling: algorithm}
\label{sec:gs_algorithm}

We describe a linear-time algorithm for the formulated graph sampling problem \eqref{eq:sample_obj1}, focusing first on the case of an SPG without self-loops in Section\;\ref{subsec:sampling_no_selfloop}.
We then extend the algorithm to the case of an SPG with self-loops in Section\;\ref{subsec:sampling_selfLoops}.
We study fast sampling exclusively for SPGs, because an $M$-EPG modeling a video can be unfolded into an SPG with or without self-loops via the two graph unfolding procedures discussed in Section\;\ref{subsec:unfolding}.

\subsection{Sampling for Simple Path Graph without Self-loops}
\label{subsec:sampling_no_selfloop}

To design an intuitive algorithm, we first swap the roles of the objective and constraint in \eqref{eq:sample_obj1} and solve instead its corresponding \textit{dual problem} given threshold $0 < T <1$ \cite{bai2020fast}:
\begin{align}
\min_{\h, \S} \|\h\|_1
~~\mbox{s.t.}~~ \lambda^-_{\min} \left( \S (\diag{\h} + \mu \cL) \S^{-1} \right) \geq T .
\label{eq:sample_obj2}
\end{align}
In words, \eqref{eq:sample_obj2} minimizes the number of samples needed to move all Gershgorin disc left-ends of similarity-transformed matrix $\S \B \S^{-1}$ to at least $T$.
It was shown in~\cite{bai2020fast} that $T$ is inversely proportional to objective $\|\h^*\|_1$ of the optimal solution $\h^*$ to \eqref{eq:sample_obj2}.
Thus, to find the dual optimal solution $\h^*$ to \eqref{eq:sample_obj2} such that $\|\h^*\|_1 = C$---and hence the optimal solution $\h^*$ also to the primal problem \eqref{eq:sample_obj1} \cite{bai2020fast}---one can perform binary search to find an appropriate $T$.
We describe a fast algorithm to approximately solve \eqref{eq:sample_obj2} for a given $T$ next.

We first state a lemma\footnote{Our earlier conference version \cite{sahami2022fast} states a lemma relating the smallest eigenvalues of Laplacians $\cL'$ and $\cL$, while here Lemma\;\ref{lemma:reduced_graph} relates the smallest eigenvalues of $\diag{\h} + \mu \cL'$ and $\diag{\h} + \mu \cL$. 
} that allows us to optimize \eqref{eq:sample_obj2} for a \textit{reduced graph} $\cG'$ containing the same node set $\cV$ as the original positive graph $\cG$, and a reduced edge subset $\cE' \subset \cE$.

\begin{lemma}
Denote by $\cG'(\cV,\cE',\W')$ a reduced graph from positive graph $\cG(\cV,\cE,\W)$, where edges $(i,j) \in \cE \setminus \cE'$ were selectively removed.
Denote by $\cL'$ and $\cL$ the generalized Laplacians for graphs $\cG'$ and $\cG$, respectively.
Then,
\begin{align}
\lambda_{\min}\left(\diag{\h} + \mu \cL'\right) \leq \lambda_{\min} (\diag{\h} + \mu\cL).
\end{align}
\label{lemma:reduced_graph}
\end{lemma}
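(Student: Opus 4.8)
The plan is to reduce the eigenvalue inequality to a positive semidefinite (PSD) ordering of the two coefficient matrices. Since both matrices share the common diagonal term $\diag{\h}$ and the common factor $\mu > 0$, it suffices to establish the Loewner-order relation $\cL \succeq \cL'$, i.e., $\cL - \cL' \succeq 0$, and then invoke the monotonicity of the smallest eigenvalue under this ordering.

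First I would characterize the difference $\cL - \cL'$. Because $\cG'$ differs from $\cG$ only through removal of edges in $\cE \setminus \cE'$ (the node set and, in particular, all self-loop weights are left intact), the self-loop contribution $\sum_{i \in \cV} u_i x_i^2$ in the GLR \eqref{eq:glr} is identical for $\cL$ and $\cL'$ and therefore cancels. Using the GLR decomposition \eqref{eq:glr}, for every $\x \in \bR^N$,
\begin{align}
\x^\top (\cL - \cL') \x = \sum_{(i,j) \in \cE \setminus \cE'} w_{i,j} (x_i - x_j)^2 \geq 0 ,
\end{align}
since each removed edge has positive weight $w_{i,j} > 0$. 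Equivalently, I would write the difference as a sum of rank-one PSD terms,
\begin{align}
\cL - \cL' = \sum_{(i,j) \in \cE \setminus \cE'} w_{i,j} (\e_i - \e_j)(\e_i - \e_j)^\top \succeq 0 ,
\end{align}
which makes PSDness manifest and confirms the diagonal bookkeeping: deleting edge $(i,j)$ zeros the off-diagonal entry and reduces the two diagonal entries $\cL_{i,i}, \cL_{j,j}$ each by $w_{i,j}$, consistent with the generalized Laplacian definition $\cL = \D - \W + \diag{\W}$.

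Then, with $\cL - \cL' \succeq 0$ and $\mu > 0$, we have
\begin{align}
(\diag{\h} + \mu \cL) - (\diag{\h} + \mu \cL') = \mu (\cL - \cL') \succeq 0 .
\end{align}
I would conclude via the Courant–Fischer min–max theorem (equivalently Weyl's inequality): using the variational characterization $\lambda_{\min}(\M) = \min_{\|\v\|_2 = 1} \v^\top \M \v$, the pointwise inequality $\v^\top (\diag{\h} + \mu \cL) \v \geq \v^\top (\diag{\h} + \mu \cL') \v$ holding for every $\v$ immediately yields $\lambda_{\min}(\diag{\h} + \mu \cL') \leq \lambda_{\min}(\diag{\h} + \mu \cL)$, which is exactly the claim.

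The only point requiring care — rather than a genuine obstacle — is verifying that edge removal leaves every self-loop weight $u_i$ unchanged, so that the self-loop energy cancels cleanly and the difference collapses to the edge-only sum above. Once the difference is expressed in the rank-one form, both the PSDness and the eigenvalue monotonicity are routine, so I do not anticipate any hard technical barrier.
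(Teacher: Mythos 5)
Your proposal is correct and follows essentially the same route as the paper's proof: both establish $\x^\top(\cL - \cL')\x = \sum_{(i,j)\in\cE\setminus\cE'} w_{i,j}(x_i - x_j)^2 \geq 0$ via the GLR decomposition and then pass to the smallest eigenvalue through its variational characterization (the paper does this by plugging in the first eigenvector of $\diag{\h} + \mu\cL$, which is the same Courant--Fischer argument you invoke). Your added rank-one decomposition of $\cL - \cL'$ and the explicit remark about self-loop weights cancelling are harmless refinements of the identical underlying idea.
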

\begin{proof}
Since $\x^\top \cL \x$ is a weighted sum of sample difference squares and sample energies \eqref{eq:glr}, for any $\x \in \mathbb{R}^N$,
\begin{align}
\x^\top (\cL - \cL') \x = \sum_{(i,j) \in \cE \setminus \cE'} w_{i,j} (x_i - x_j)^2 \stackrel{(a)}{\geq} 0
\end{align}
where $(a)$ is true since $w_{i,j} \geq 0$ for positive graph $\cG$.
Thus,
\begin{align}
\x^\top (\diag{\h} + \mu \cL') \x \leq \x^\top (\diag{\h} + \mu \cL) \x, ~~~\forall \x .
\end{align}
Since this inequality includes the first (unit-norm) eigenvector $\v_1$ of $\diag{\h} + \mu \cL$ corresponding to the smallest eigenvalue,
\begin{align}
\v_1^\top (\diag{\h} + \mu \cL') \v_1 \leq \lambda_{\min} (\diag{\h} + \mu \cL) .
\end{align}
Since the left side is at least $\lambda_{\min}(\text{diag}(\h) + \mu \cL')$, the lemma is proven.
\end{proof}

Lemma\;\ref{lemma:reduced_graph} applies to any reduced graph $\cG'$ with removed edges $(i,j) \in \cE \setminus \cE'$. 
Particularly useful in our algorithm development is when $\cG'$ is a \textit{disconnected graph} $\cG'(\cV_1' \cup \cV_2',\cE_1' \cup \cE_2', \W')$ for $\cV_1' \cap \cV_2' = \emptyset$ and $\cV_1' \cup \cV_2' = \cV'$, where edge set $\cE'_1$ ($\cE'_2$) connects only nodes in $\cV'_1$ ($\cV'_2$).
In other words, edges $(i,j)$ where $i \in \cV'_1$ and $j \in \cV'_2$ are all removed in $\cG'$.
With appropriate node reordering, the adjacency matrix for $\cG'$ is block-diagonal, \ie, $\W' = \diag{\W'_1, \W'_2}$, where $\W'_1$ and $\W'_2$ are the adjacency matrices for the two sub-graphs disconnected from each other. 
We state a useful corollary.
\begin{corollary}
Suppose $\cG'(\cV_1' \cup \cV_2',\cE_1' \cup \cE_2', \W')$ is a reduced disconnected graph from original graph $\cG(\cV,\cE,\W)$, where $\W' = \diag{\W'_1, \W'_2}$.
Then,
\begin{align}
\min_{i \in \{1,2\}} \lambda_{\min} (\diag{\h_i} + \mu \cL_i') \leq \lambda_{\min} (\diag{\h} + \mu \cL),
\end{align}
where $\h_i$ is the sub-vector of $\h$ corresponding to nodes $\cV_i'$, $\cL_i'$ is the generalized Laplacian for sub-graph $i$, and $\cL$ is the generalized Laplacian for graph $\cG$.
\label{colloary:divideGraph}
\end{corollary}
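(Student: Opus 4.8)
The plan is to combine the edge-removal monotonicity already established in Lemma\;\ref{lemma:reduced_graph} with the spectral decomposition of a block-diagonal matrix. First I would invoke Lemma\;\ref{lemma:reduced_graph} directly: since $\cG'$ is obtained from $\cG$ by (among possibly other removals) deleting the nonnegative-weight edges joining $\cV_1'$ and $\cV_2'$, we immediately obtain
\begin{align}
\lambda_{\min}(\diag{\h} + \mu \cL') \leq \lambda_{\min}(\diag{\h} + \mu \cL).
\end{align}
Thus the entire task reduces to identifying the left-hand side with $\min_{i \in \{1,2\}} \lambda_{\min}(\diag{\h_i} + \mu \cL_i')$.

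Next I would exploit the disconnected structure. Because no edge of $\cG'$ joins a node of $\cV_1'$ to a node of $\cV_2'$, every nodal degree and every self-loop weight depends only on edges within its own part; hence, after reordering nodes so that $\cV_1'$ precedes $\cV_2'$, the generalized Laplacian is block-diagonal, $\cL' = \diag{\cL_1', \cL_2'}$. The sampling term is diagonal and splits conformably, $\diag{\h} = \diag{\diag{\h_1}, \diag{\h_2}}$, so that
\begin{align}
\diag{\h} + \mu \cL' = \diag{\diag{\h_1} + \mu \cL_1', \; \diag{\h_2} + \mu \cL_2'}.
\end{align}

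Then I would apply the standard fact that the spectrum of a block-diagonal symmetric matrix is the union of the spectra of its diagonal blocks; consequently its smallest eigenvalue is the smaller of the two blocks' smallest eigenvalues, i.e. $\lambda_{\min}(\diag{\h} + \mu \cL') = \min_{i \in \{1,2\}} \lambda_{\min}(\diag{\h_i} + \mu \cL_i')$. Substituting this identity into the inequality supplied by Lemma\;\ref{lemma:reduced_graph} yields the claimed bound.

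There is essentially no deep obstacle here; the only point requiring genuine care is the bookkeeping that verifies the generalized Laplacian of the disconnected graph is exactly the direct sum of the sub-graph generalized Laplacians --- that is, that removing the cross-part edges leaves the within-part degrees and the $\diag{\W}$ self-loop contributions untouched, so no ``leftover'' off-block or diagonal terms survive. Once that is confirmed, the block-diagonal eigenvalue decomposition and Lemma\;\ref{lemma:reduced_graph} combine to close the argument. This corollary is the structural ingredient that later licenses a divide-and-conquer evaluation of the objective, since it permits the worst-case smallest-eigenvalue bound to be computed independently on each component and then minimized.
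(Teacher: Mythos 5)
Your proposal is correct and follows essentially the same route as the paper's proof: first apply Lemma~\ref{lemma:reduced_graph} to the reduced disconnected graph, then use the standard fact that the smallest eigenvalue of the block-diagonal matrix $\diag{\h} + \mu \cL' = \diag{\diag{\h_1} + \mu \cL_1',\, \diag{\h_2} + \mu \cL_2'}$ is the minimum over the blocks' smallest eigenvalues. Your extra bookkeeping remark---that removing cross-part edges leaves the within-part degree and self-loop terms intact, so $\cL'$ really is the direct sum $\diag{\cL_1', \cL_2'}$---is a point the paper leaves implicit, but it is handled automatically by defining $\cL'$ as the generalized Laplacian of the reduced graph $\cG'$, exactly as Lemma~\ref{lemma:reduced_graph} does.
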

\begin{proof}
By Lemma\;\ref{lemma:reduced_graph}, we know that for Laplacian $\cL'$ of reduced disconnected graph $\cG'$,
\begin{align}
\lambda_{\min} (\diag{\h} + \mu \cL') \leq \lambda_{\min} (\diag{\h} + \mu \cL) .
\end{align}
From linear algebra, we also know that for a block-diagonal matrix $\diag{\h} + \mu \cL'$ with sub-block matrices $\diag{\h_1} + \mu \cL'_1$ and $\diag{\h_2} + \mu \cL'_2$,
\begin{align}
\min_{i \in \{1,2\}} \lambda_{\min} (\diag{\h_i} + \mu \cL_i') = \lambda_{\min} (\diag{\h} + \mu \cL') .
\end{align}
Thus, the corollary is proven.
\end{proof}

Given Corollary\;\ref{colloary:divideGraph}, the main idea to solve \eqref{eq:sample_obj2} is \textit{divide-and-conquer}: to maintain threshold $T$ of lower bound $\lambda_{\min}^-(\diag{\h} + \mu \cL)$,  
identify one sample node $k$ such that left-ends of Gershgorin discs corresponding to the first $d$ nodes can be moved beyond $T$.
We then recursively solve the same sampling problem for sub-graph $\cG'$ with subset of nodes $\{d+1, \ldots N\}$.

Specifically, given generalized Laplacian $\cL$ for SPG $\cG$, we solve
\begin{align}
\max_{k, \S^d}~ d, ~~
\mbox{s.t.} ~~\lambda^-_{\min}(\S^d (\diag{\e_k} + \mu \cL^d) (\S^d)^{-1}) \geq T
\label{eq:sample_obj3}
\end{align}
where $\e_k \in \{0,1\}^d$ is a length-$d$ \textit{canonical vector} with only one non-zero entry $e_{k,k}=1$ for $k \in \{1, \ldots, d\}$, $\cL^{d}$ is the generalized Laplacian for the sub-graph containing only the first $d$ nodes, and $\S^d$ is a diagonal $d \times d$ matrix.
In words, \eqref{eq:sample_obj3} seeks one single sample node $k$ and similarity transform matrix $\S^d$, so that Gershgorin disc left-ends of $d \times d$ matrix, $\S^d (\text{diag}(\e_k) + \mu \cL^d)(\S^d)^{-1}$, move beyond $T$.

To solve \eqref{eq:sample_obj3} efficiently, we first assume that SPG $\cG$ has no self-loops.
The implication is that all Gershgorin discs $i$ of generalized Laplacian $\cL$ have left-ends $c_i - r_i$ at $0$, \ie,
\begin{align}
\cL_{i,i} - \sum_{j \neq i} |\cL_{i,j}| =  \sum_{j \neq i} W_{i,j} - \sum_{j \neq i} W_{i,j} = 0, ~~\forall i .
\end{align}
This means that a sample node $k$ and transform matrix $\S^d$ must move all $d$ disc left-ends from $0$ to $T$.
\blue{Sampling node $i$ (setting $h_i = 1$) shifts the center of disc $i$ rightward by $1$, while scaling by $\S^d$ rescales the disc radii; Fig.\;\ref{fig:gda} illustrates these two operations.}

\begin{figure}[t]
	\centering
	\includegraphics[width=0.99\linewidth]{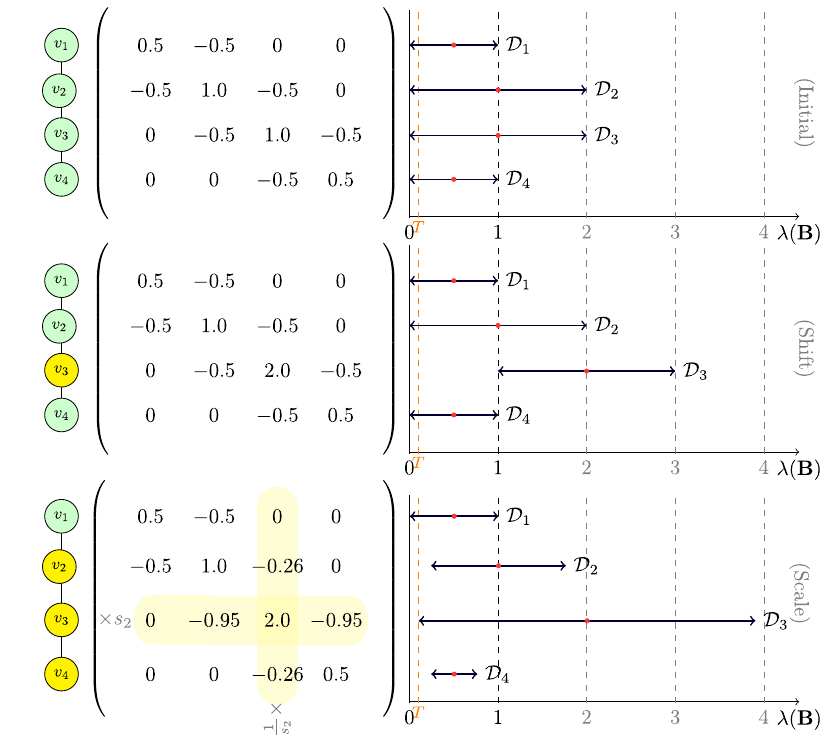}
 \vspace{-0.2in}
	\caption{Illustration of Shifting and Scaling operations in \textit{Gershgorin Disk Alignment}.
		\textcolor{gray}{(Initial)}: The example shows a simple path graph with $4$ nodes denoted by $v_1,v_2,\dots,v_4$, connected by edges with weights $w_{i,i+1}=0.5$, $\forall i=1,2,\dots,3$.
		The matrix represented here is $\mathbf{B}=\diag{\mathbf{h}} + \mu\mathbf{L}$, where $\mathbf{L}$ denotes the Laplacian matrix.
		Initially, all disc left-ends are located at $0$.
		\textcolor{gray}{(Shift)}: Upon sampling node $v_3$, its corresponding disc, $\mathcal{D}_3$, is shifted right by $1$.
		\textcolor{gray}{(Scale)}: The application of scalar $s_3=1.9$ causes $\mathcal{D}_3$'s left-end to align at $T=0.1$, while the reciprocal scalars $1/s_3$ decrease the radii of $\mathcal{D}_2$ and $\mathcal{D}_4$ subsequently.
	}
	\label{fig:gda}
\end{figure}

\subsubsection{Upstream Sampling Algorithm}

We derive an \textit{upstream} procedure to select one sample $k$ and diagonal matrix $\S^d = \diag{\s_d}$ for problem \eqref{eq:sample_obj3}.
\blue{In a nutshell, starting from node $1$, we find the \textit{furthest} sample $k$ that would ``cover''\footnote{By cover, we mean sample $k$ induces changes in Gershgorin disc left-ends of nodes in coverage to be greater than or equal to $T$.} all left-side nodes $\{1, \ldots, k-1\}$ in the SPG.
Having selected sample $k$, we then determine the corresponding right-side coverage $\{k+1, \ldots, d\}$ via a \textit{downstream} procedure to be detailed in Section\;\ref{subsubsec:computing_scalars}.
Unlike the greedy set-cover procedure that \gdas{}~\cite{bai2020fast} employs over pre-computed coverage sets, this upstream sweep selects a sample $k$ directly in a single traversal.}

Starting from node $1$, setting $s_1 = 1$, the minimum scalar $s_2^l > 1$ needed to move left-end of disc $1$ beyond $T$ is
\begin{align}
\cL_{1,1} - s_2^{-1} W_{1,2} &\geq T
\nonumber \\
s_2^l &\triangleq \frac{W_{1,2}}{\cL_{1,1}-T} .
\end{align}
$s_2^l$ is a ``minimum'' value in the sense that any smaller scalar $s_2 < s_2^l$ would not move disc $1$ left-end beyond $T$.
\blue{In words, $s^l_2$ is the minimum \textit{demand} node $1$ is requesting.}

If node $2$ is sampled, then the maximum scalar $s_2$ while ensuring disc $2$ left-end moves beyond $T$ is $s_2^u$:
\begin{align}
\cL_{2,2} + 1 - s_2 (W_{2,1} + W_{2,3}) &\geq T
\nonumber \\
s^u_2 &\triangleq \frac{\cL_{2,2}+1 - T}{W_{2,1} + W_{2,3}} .
\end{align}
\blue{In words, $s^u_2$ is the maximum \textit{supply} node $2$ can provide if sampled.
If $s^u_2 < s^l_2$, then sampling node $2$ does not generate sufficient supply to meet the minimum demand requested by node $1$. 
Hence, sampling node $2$ cannot cover node $1$, and node $1$ must be sampled instead. 
In this case, sample node is $k=1$, and maximal $d$ to determine right-side coverage $\{2,\ldots,d\}$ can be obtained via the to-be-discussed downstream procedure.}

If $s^u_2 \geq s^l_2$, then sampling node $2$ can cover node $1$, and hence $k > 1$. 
But does node $2$ need to be sampled? 
If node $2$ is also not sampled, then using $s_2^l$ (to ensure node $1$ is covered), the minimum scalar (demand) $s_3^l \geq 1$ required to move disc $2$ left-end beyond $T$ is 
\begin{align}
\cL_{2,2} - s_2^l (W_{2,1} + s_3^{-1} W_{2,3}) &\geq T 
\nonumber \\
s_3^l &\triangleq \frac{s_2^l W_{2,3}}{\cL_{2,2} - s^l_2 W_{2,1} - T} .
\end{align}

If node $3$ is a sample node, then the maximum scalar (supply) $s_3^u$ while moving disc $3$ left-end beyond $T$ is
\begin{align}
\cL_{3,3} + 1 - s_3((s_2^l)^{-1} W_{3,2} + W_{3,4} ) &\geq T   
\nonumber \\
s_3^u \triangleq \frac{\cL_{3,3} + 1 - T}{(s_2^l)^{-1} W_{3,2} + W_{3,4}} .
\end{align}
\blue{If $s_3^u < s_3^l$, then the maximum supply provided by sampling node $3$ is not sufficient to satisfy the minimum demand required by node $2$.}
Thus, node $2$ must be sampled.
On the other hand, if $s_3^u \geq s_3^l$, then sampling node $3$ can cover nodes $2$ and $1$.
\blue{Fig.\;\ref{fig:updownstream} visualizes this upstream procedure, tracking demand $s^l_i$ and supply $s^u_i$ along the path until $s^u_i < s^l_i$ to determine sample $k$.}

We can now generalize the previous analysis to an algorithm to determine a single sample node $k$ as follows.

\begin{enumerate}
\item Initialize $s_1^l \leftarrow 1$, $W_{1,0} \leftarrow 0$, and $i \leftarrow 1$.
\item Compute \textit{minimum scalar} $s_{i+1}^l$ required to cover upstream nodes $i, \ldots, 1$:
\begin{align}
s_{i+1}^l \triangleq \frac{s_i^l W_{i,i+1}}{\cL_{i,i} - s_i^l W_{i,i-1} - T}  .
\label{eq:sl}
\end{align}
\item Compute \textit{maximum scalar} $s_{i+1}^u$ if node $i+1$ is sampled.
\begin{align}
s_{i+1}^u \triangleq \frac{\cL_{i+1,i+1}+1-T}{(s_i^l)^{-1} W_{i+1,i} + W_{i+1,i+2}}  .
\label{eq:su}
\end{align}
If $s_{i+1}^u < s_{i+1}^l$ or $s_{i+1}^l<1$, then $k \leftarrow i$ and $s_k^l \leftarrow s_k^u$. Exit.
\item Given $s_{i+1}^u \geq s_{i+1}^l$ and $s_{i+1}^l \geq 1$, increment $i$ and goto step 2.
\end{enumerate}
Given sample $k$, compute $d$ via procedure in Section\;\ref{subsubsec:computing_scalars}.

\subsubsection{Downstream Coverage Computation}
\label{subsubsec:computing_scalars}

We reuse the \textit{downstream} procedure in \cite{bai2020fast} to determine the right-side coverage $\{k+1, \ldots, d\}$ of a chosen sample node $k$.
\blue{Specifically, because $\cG$ is an SPG, the procedure becomes a simple traversal along the path graph and does not require the breadth-first neighborhood search in \cite{bai2020fast}.}

Sampling $k$ means $h_k = 1$, and thus diagonal entry $B_{k,k}$---center of disc $k$ corresponding to node $k$---is increased by $1$.
One can now \textit{expand} the radius of disc $k$ by factor $s_k^u > 1$ so that left-end of disc $k$ is at $T$ \textit{exactly}, \ie, 
\begin{align}
\cL_{k,k} + 1 - s_k (W_{k,k-1} + W_{k,k+1}) &\geq T
\nonumber \\
s_k^u \triangleq \frac{\cL_{k,k}+1-T}{W_{k,k-1}+W_{k,k+1}} .
\end{align}
$s_k^u$ is a ``maximum'' in the sense that any larger scalar $s_k > s_k^u$ would reduce disc $k$ left-end to below $T$.

Setting scalar $s_k^u > 1$ also means reducing the $(k+1,k)$-entry of matrix $\S \B \S^{-1}$ to $(s_k^u)^{-1} W_{k+1,k}$, and thus reducing disc $k+1$ radius.
For disc $k+1$ left-end to move past $T$, the following inequality must be satisfied:
\begin{align}
\cL_{k+1,k+1} - s_{k+1} ((s_k^u)^{-1} W_{k+1,k} + W_{k+1,k+2}) &\geq T   
\nonumber \\
s_{k+1}^u \triangleq \frac{\cL_{k+1,k+1}-T}{(s_k^u)^{-1}W_{k+1,k} + W_{k+1,k+2}} .
\label{eq:sd}
\end{align}

If $s_{k+1}^u < 1$, then sample $k$ cannot cover node $k+1$, and $d = k$.
Otherwise, sample $k$ covers node $k+1$, and we set $d=k+1$.
Using $s_{k+1}^u$ we check if $s_{k+2}^u < 1$ for coverage of node $k+2$, and so on.
In essence, the benefit of sampling node $k$ is propagated \textit{downstream} through a chain of connected nodes $j$ along SPG $\cG$ via scalars $s_j^u > 1$.
We summarize our upstream/downstream procedure in Algorithm~\ref{alg:grd}.

\begin{algorithm}[t]
	\caption{Greedy Elaborate Path Graph Sampling ($\beta=2$)}\label{alg:grd}
\begin{algorithmic}
		\STATE Initialize \: $\Omega \leftarrow \cV$, $\cS \leftarrow \emptyset$, $q\leftarrow 1$
\STATE Initialize \: $\W^q\leftarrow \W$, \textbf{Flag} $\leftarrow$ False
		\WHILE{$\Omega \neq \emptyset$ and $q \leq K$}

		\STATE $\cS_k \leftarrow$ Take the procedure's steps 1-4 using \eqref{eq:sl}, \eqref{eq:su} to select sample node $k$.
		\STATE $\Omega_k \leftarrow$ Given $\cS_k$, compute $d$ via procedure in \ref{subsubsec:computing_scalars}.
		\STATE $\cS \leftarrow \cS \cup \cS_k$,  $\Omega \leftarrow \Omega \setminus \Omega_k$
		\STATE Update $\cG$ by remaining nodes in $\Omega$.
		\STATE $q \leftarrow q+1$
		\ENDWHILE
		\IF{$\Omega = \emptyset$}
		\STATE \textbf{Flag} $\leftarrow$ True
		\ENDIF
	\end{algorithmic}
\end{algorithm}
 
The next lemma states that the scalars computed during the upstream/downstream procedure can indeed compose a diagonal matrix $\S^d$ so that the Gershgorin disc left-end condition $\lambda_{\min}^-(\S^d(\diag{\e_k} + \mu \cL^d)(\S^d)^{-1}) \geq T$ in problem \eqref{eq:sample_obj3} is indeed satisfied.
\begin{lemma}\label{lemma:gctsl}
Using $s_{i+1}^l$ computed using \eqref{eq:sl} for  nodes $i+1 \in \{1, \ldots, k-1\}$,  $s_{i+1}^u$ computed using \eqref{eq:su} for sample node $k$, and $s^u_{i+1}$ computed using \eqref{eq:sd} for nodes $i+1 \in \{k+1, \ldots, d\}$ to compose $\S^d$, $\lambda^-_{\min}(\S^d (\diag{\e_k} + \mu \cL^d) (\S^d)^{-1}) \geq T$.
\end{lemma}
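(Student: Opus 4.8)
The plan is to prove the claim directly by verifying that every Gershgorin disc of the transformed matrix $\S^d \B^d (\S^d)^{-1}$, where $\B^d \triangleq \diag{\e_k} + \mu \cL^d$, has its left-end at $T$ or beyond; since $\lambda^-_{\min}$ is by definition the minimum disc left-end \eqref{eq:GCTc}, this immediately yields $\lambda^-_{\min}(\S^d \B^d (\S^d)^{-1}) \geq T$. First I would record the elementary fact that a positive diagonal similarity transform $\S^d = \diag{s_1,\dots,s_d}$ leaves every disc center $B^d_{i,i}$ unchanged and rescales the off-diagonal $(i,j)$ entry by $s_i/s_j$, so that for the tri-diagonal $\cL^d$ the left-end of disc $i$ is $B^d_{i,i} - (s_i/s_{i-1})|B^d_{i,i-1}| - (s_i/s_{i+1})|B^d_{i,i+1}|$, with the obvious boundary modification at the two ends of the path.

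Next I would partition the node set $\{1,\dots,d\}$ into the upstream non-sample nodes $\{1,\dots,k-1\}$, the sample node $k$, and the downstream non-sample nodes $\{k+1,\dots,d\}$, and treat each group in turn. For the sample node $k$, the definition \eqref{eq:su} of $s^u_k$ is precisely the solution of the equation placing disc $k$'s left-end at $T$ (its center is raised by $1$ since $h_k=1$), so its left-end equals $T$. For the upstream nodes I would argue by induction on $i$: starting from $s^l_1=1$, the recursion \eqref{eq:sl} is derived from the equality ``left-end of disc $i$ equals $T$'', so discs $1,\dots,k-2$, which do not involve $s_k$, sit exactly at $T$. The one disc affected by replacing $s^l_k$ with the actually-used $s^u_k$ is disc $k-1$; here I would invoke monotonicity---increasing $s_k$ strictly decreases the factor $s_{k-1}/s_k$ and hence the radius of disc $k-1$---together with the fact that node $k$ passed the guard $s^u_k \geq s^l_k \geq 1$ at the preceding iteration, so substituting the larger $s^u_k$ for $s^l_k$ can only push disc $k-1$'s left-end from $T$ upward, preserving $\geq T$.

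For the downstream nodes $\{k+1,\dots,d\}$, I would observe that their scalars are produced by the downstream coverage procedure of Section~\ref{subsubsec:computing_scalars}, whose defining equations again place each disc left-end at exactly $T$, and that $d$ is taken to be the largest frontier index for which all required scalars remain admissible (at least $1$); thus every downstream disc left-end is $\geq T$ as well. Collecting the three cases, every disc $i\in\{1,\dots,d\}$ of $\S^d \B^d (\S^d)^{-1}$ has left-end $\geq T$, whence $\lambda^-_{\min}(\S^d \B^d (\S^d)^{-1}) = \min_i(\text{left-end}_i) \geq T$.

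The main obstacle I anticipate is the bookkeeping at the junction node $k$: one must check that a single scalar $s^u_k$ can simultaneously hold disc $k$'s own left-end at $T$, supply enough scaling to keep the upstream neighbor $k-1$ covered, and feed correctly into the downstream recursion, and that these requirements are compatible exactly because of the stopping inequality $s^u_k \geq s^l_k$. A secondary technical point is to make the induction clean by showing each $s^l_i$ (and each downstream scalar) is well-defined and at least $1$, so the transform is genuine and radii shrink rather than grow; this is precisely what the guard $s^l_{i+1}\geq 1$ enforces before a node is declared covered. The two boundary discs---node $1$, where $s_1=1$ and there is only a right neighbor, and the frontier node $d$, whose left-end may exceed $T$---should be dispatched as explicit base and edge cases.
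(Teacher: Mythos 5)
Your proposal is correct and follows essentially the same route as the paper's proof in Appendix~\ref{appendix:lemmaproof}: a direct, disc-by-disc verification that the scalars defined by \eqref{eq:sl} and \eqref{eq:su} place every Gershgorin disc left-end of $\S^d (\diag{\e_k} + \mu \cL^d)(\S^d)^{-1}$ at or beyond $T$. You are in fact more thorough than the paper in two spots: the paper's proof verifies only rows $1,\ldots,k$ and then declares the lemma proven, leaving the downstream discs $k+1,\ldots,d$ to be justified implicitly by the coverage procedure of Section~\ref{subsubsec:computing_scalars}, and it glosses the junction at disc $k-1$, which you handle cleanly by monotonicity (substituting $s_k^u \geq s_k^l$ shrinks disc $k-1$'s radius, so its left-end can only move right). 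One small correction to your upstream induction: the claim that discs $1,\ldots,k-2$ sit \emph{exactly} at $T$ is not quite right, because \eqref{eq:sl} is conservative---its denominator uses the unscaled weight $W_{i,i-1}$ rather than the actual transformed entry $(s_{i-1}^l)^{-1} W_{i,i-1}$---so for $i \geq 3$ disc $i$'s left-end lands at
\begin{align}
T + s_i^l\, W_{i,i-1}\left(1 - (s_{i-1}^l)^{-1}\right) \;\geq\; T,
\end{align}
with equality only when $s_{i-1}^l = 1$. This slack is in the favorable direction (it is precisely what the paper absorbs in step $(a)$ of its proof via $s_{i-1}^l \geq 1$), so your induction hypothesis should read ``left-end $\geq T$'' rather than ``$= T$''; with that wording the argument and conclusion go through unchanged.
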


The proof of Lemma~\ref{lemma:gctsl} is provided in Appendix~\ref{appendix:lemmaproof}.

\begin{figure}[t]
	\centering
	\includegraphics[width=0.99\linewidth]{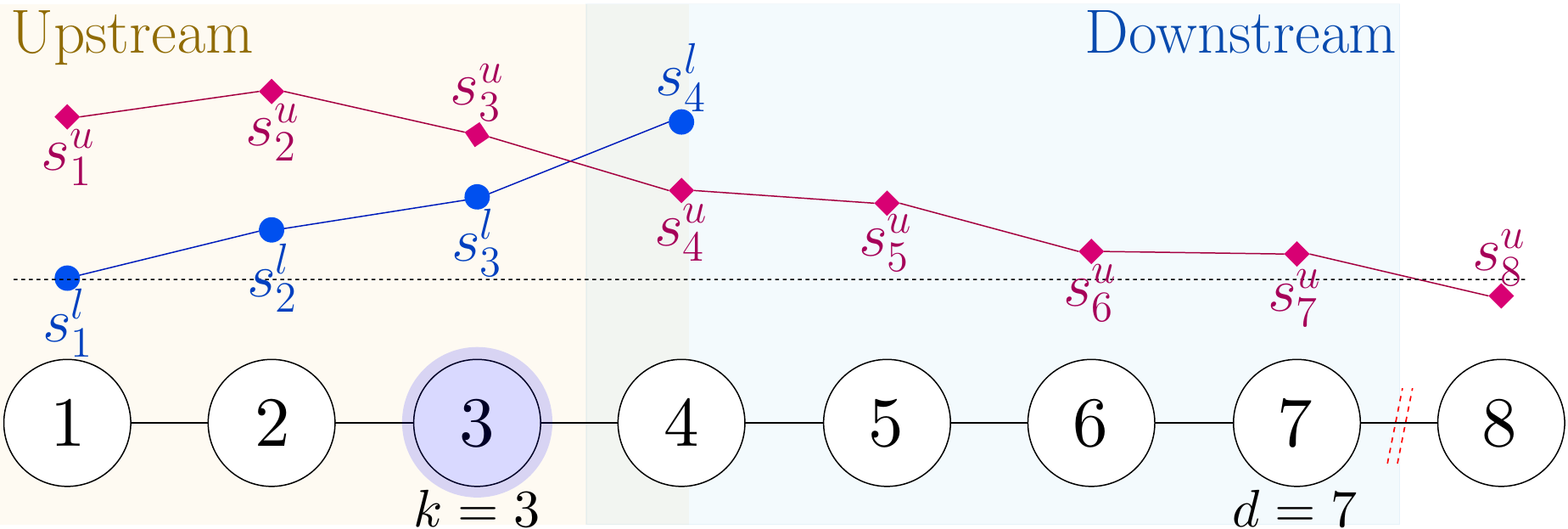}
	\vspace{-0.1in}
	\caption{Illustration of \textit{Upstream} and \textit{Downstream} procedures in an SPG graph.
		The algorithm iteratively computes the scalars $s^u_i$ and $s^l_i$ until $s^u_i < s^l_i$, identifying the sampled node ($k=3$).
		Subsequently, the downstream procedure computes the coverage of sampled node $k$ onwards.
		Covered nodes are then removed from the graph, allowing the sampling algorithm to proceed with the remaining SPG.
	}
	\label{fig:updownstream}
\end{figure}

\subsection{Sampling for Simple Path Graph with Self-loops}
\label{subsec:sampling_selfLoops}

In the case of $\beta=0$ in the graph unfolding procedure (see Fig.\;\ref{fig:vsg2}c), edge sparsification introduces self-loops in the resulting SPG $\cG$.
Self-loops in a graph complicate our developed upstream sampling algorithm by varying the initial Gershgorin disc left-ends of Laplacian matrix $\cL$.

Thanks to a recent theorem called \textit{Gershgorin Disc Perfect Alignment}  (GDPA)  \cite{yang2021signed}, we can resolve this problem by first aligning Gershgorin disc left-ends at minimum eigenvalue $\lambda_{\min}(\cL)$ before executing our upstream sampling algorithm.
Specifically, GDPA states that a similarity-transformed matrix $\cL'=\P\cL\P^{-1}$, where $\P = \diag{1/v_{1,1}, \ldots, 1/v_{1,N}}$ and $\v_1$ is $\cL$'s first strictly positive eigenvector, has Gershgorin disc left-ends perfectly aligned at $\lambda_{\min}(\cL)$.
Computing the first eigenvector $\v_1$ for our sparse and symmetric Laplacian matrix $\cL$ can be done in linear time using \textit{Locally Optimal Block Preconditioned Conjugate Gradient} (LOBPCG)~\cite{knyazev2007block}.

Although the first eigenvector $\v_1$ of a generalized Laplacian $\cL$ for a positive graph is provably strictly positive  \cite{yang2021signed}, in practice, the computation of $\v_1$ can be unstable when $v_{1,i} \approx 0, \exists i$.
This makes the perfect alignment of disc left-ends at $\lambda_{\min}(\cL)$ difficult.
We circumvent the need to compute $\v_1$ directly as follows.
We know that disc left-ends of similarity-transformed matrix $\P\cL\P^{-1}$ are aligned at $\lambda_{\mathrm{min}}(\cL)$, \ie, 
\begin{align}\label{eq:PLP1}
\left(\P \cL \P^{-1}\right)\boldsymbol{1} = \lambda_\mathrm{min}\boldsymbol{1}.
\end{align}

Given that $\cL$ is tri-diagonal for an SPG $\cG$, we can write for rows $1$ to $N-1$ of $\P \cL \P^{-1}$: 

\begin{small}
\begin{align}
\lambda_{\min} &= \cL_{1,1} - \frac{p_1}{p_2} |\cL_{1,2}| 
\label{eq:ratios} \\
\lambda_{\min} &= \cL_{i,i} - \frac{p_i}{p_{i-1}} | \cL_{i,i-1}| - \frac{p_i}{p_{i+1}} |\cL_{i,i+1}|, ~~i \in \{2, \ldots, N-1\}
\nonumber 
\end{align}
\end{small}
If we now define $\alpha_i \triangleq \frac{p_i}{p_{i+1}} = \frac{v_{1,i+1}}{v_{1,i}}$, then \eqref{eq:ratios} simplifies to
\begin{align}
\lambda_{\min} &= \cL_{1,1} - \alpha_1 |\cL_{1,2}| 
\label{eq:ratios2} \\
\lambda_{\min} &= \cL_{i,i} - \alpha_{i-1}^{-1} | \cL_{i,i-1}| - \alpha_i |\cL_{i,i+1}|, ~~i \in \{2, \ldots, N-1\}
\nonumber 
\end{align}
Using \eqref{eq:ratios2}, $\alpha_i$ can be computed iteratively from $i=1$ onwards. 
After disc left-ends of $\P \cL \P^{-1}$ are aligned at $\lambda_{\min}(\cL)$, our developed upstream sampling algorithm can be deployed.

\subsection{Computational Complexity Analysis}
\label{subsec:GS-Complexity}

To select one sample $k$, graph sampling involves one path graph  partition.
In the upstream procedure, scalars $s^l_i$ and $s^u_i$ are computed once for each node using \eqref{eq:sl} and \eqref{eq:su}, from $i=1$ to $i=k$.
\blue{This incurs a cost of $\cO(k)$.
The downstream procedure similarly incurs a cost of $\cO(d-k)$.
Combining both procedures, the total cost for each path graph partition is $\cO(d)$.
Given $N$ nodes in an SPG, after each partition there are $N-d$ nodes remaining.
Thus, the overall complexity is $\cO(N)$.}

Binary search with precision $\epsilon$ (e.g., $\epsilon=10^{-6}$) is used to determine the appropriate $T$ in each iteration, resulting in a total algorithm cost of $\cO(N \log \frac{1}{\epsilon})$.
Since $\epsilon$ is fixed and not a function of $N$, the overall computational cost is $\cO(N)$.

In the case of SPG with self-loops, the computation complexity of $\lambda_{\min}(\cL)$ using LOBPCG algorithm for sparse and symmetric graph Laplacian matrix $\cL$ is linear.
The iterative alignment of left-end of discs also takes $\cO(N)$, and hence the overall complexity of the algorithm is $\cO(N\log\frac{1}{\epsilon})$, same as the case for SPG without self-loops.

\section{Experiments}\label{sec:results}

We first compare the signal reconstruction quality of our graph unfolding procedure and sampling algorithm with several existing graph sampling methods. 
To assess the performance of video summarization, we employ several datasets, including VSUMM/OVPs and YouTube~\cite{deavila2011vsumm}.
Notably, given our focus on keyframe selection for transitory videos, we created a new dataset (see Section~\ref{mhsd}) to address the current problem of lacking keyframe-based video datasets on a larger scale and with flexible licensing. 
Finally, we present an ablation study on algorithm components.

\subsection{Graph Sampling Performance}\label{subsec:GSPerf}

We show the efficacy of our graph unfolding procedure and sampling method specialized for $M$-EPG graphs.
We conducted a signal-reconstruction experiment similar to \cite{bai2020fast}, using graphs generated from videos in the VSUMM~\cite{deavila2011vsumm} dataset.
\blue{Our sampler is agnostic to the choice of graph signal, presuming only smoothness \wrt{} $\cL$; we therefore assess reconstruction on scalar signals, following the benchmark of \gdas{}~\cite{bai2020fast}.
}
The graph construction process, detailed in Section~\ref{sec:graph}, generated 25 $2$-EPG graphs at $2$ frames per second (fps), resulting in graph sizes between $N=116$ and $N=230$.
For edge weight computation (see \eqref{eq:edgeWeight}), we used $\sigma=6$.

To simulate graph signals for evaluation, we considered two classes of random signals, as done in \cite{bai2020fast}:
\begin{itemize}
\item \textbf{BL Signal:} The ground-truth was a bandlimited (BL) signal defined by the first $\lfloor N/20 \rfloor$ eigenvectors of Laplacian $\L$, resulting in a strictly low-pass signal.
The GFT coefficients are drawn from a $\mathcal{N}(1,0.5^2)$ distribution~\cite{anis2016efficient}.
\item \textbf{Gaussian Markov Random Field (GMRF):} We employed a shifted version of Laplacian $\L$ as the inverse covariance (precision) matrix, \ie, a multivariate Gaussian distribution $\mathcal{N}\left(\mathbf{0}, (\L + \delta \I)^{-1}\right)$, where $\delta=10^{-4}$, same as \cite{bai2020fast}. 
To ensure consistency of graph signal power, we normalized the signal power using $(\x - \bar{\x})/\text{std}(\x)$, where $\bar{\x}$ and $\text{std}(\x)$ denote the mean and standard deviation of the graph signal, respectively.
\end{itemize}

To compute the reconstructed signal, we solved linear system \eqref{eq:reconObj} to reconstruct the signal from (noisy) observations and measured the \textit{mean squared error} (MSE) for both $\lfloor N/20 \rfloor$-BL and GMRF graph signals (see Fig.~\ref{fig:mse_vs_k}).

\begin{figure*}
    \centering
    \begin{minipage}{0.49\linewidth}
				\subfloat[\label{fig:mse-a}]{\includegraphics[width=0.80\linewidth]{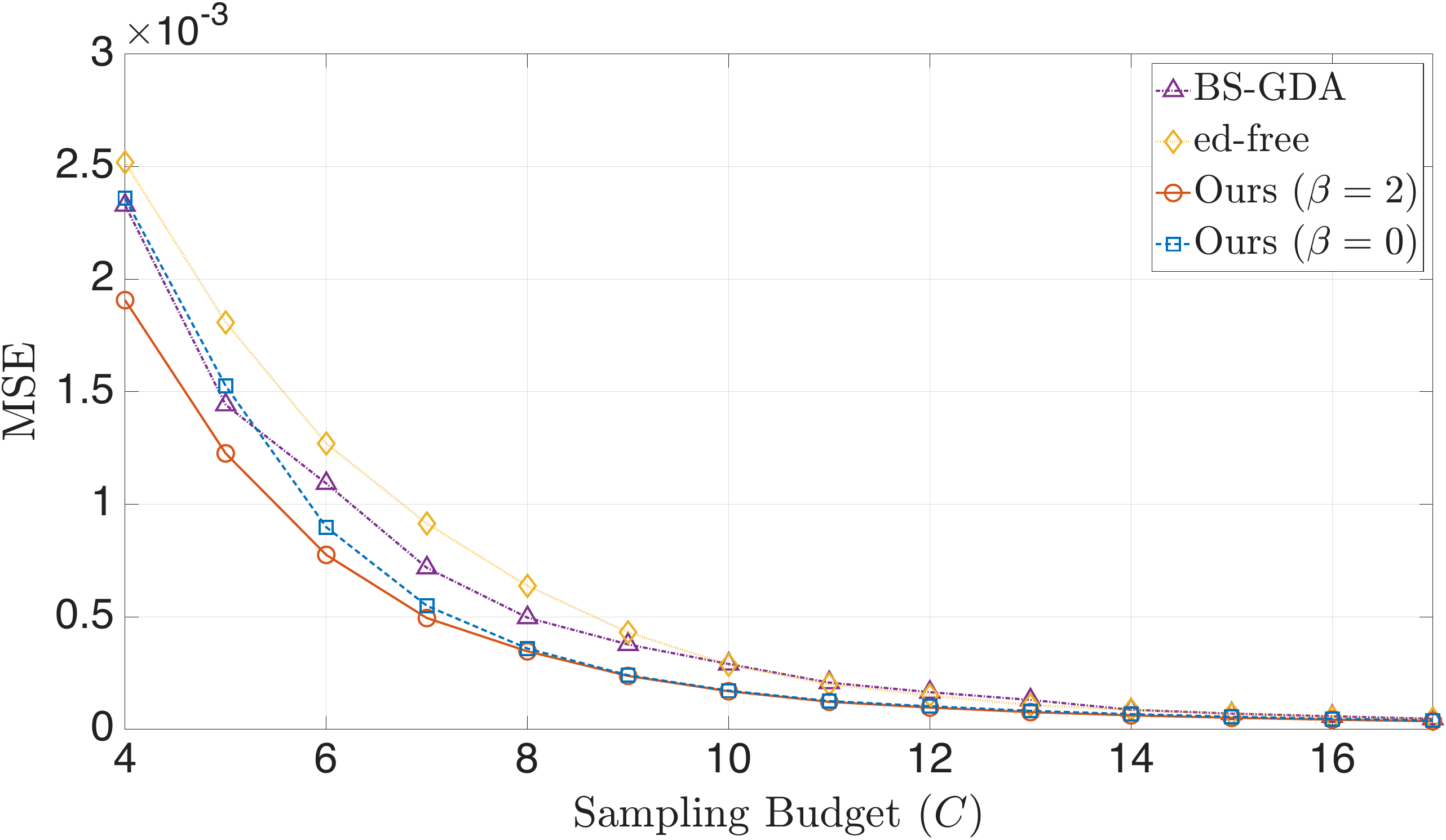}
        }\hfill
    \end{minipage}\begin{minipage}{0.49\linewidth}
				\subfloat[\label{fig:mse-b}]{\includegraphics[width=0.80\linewidth]{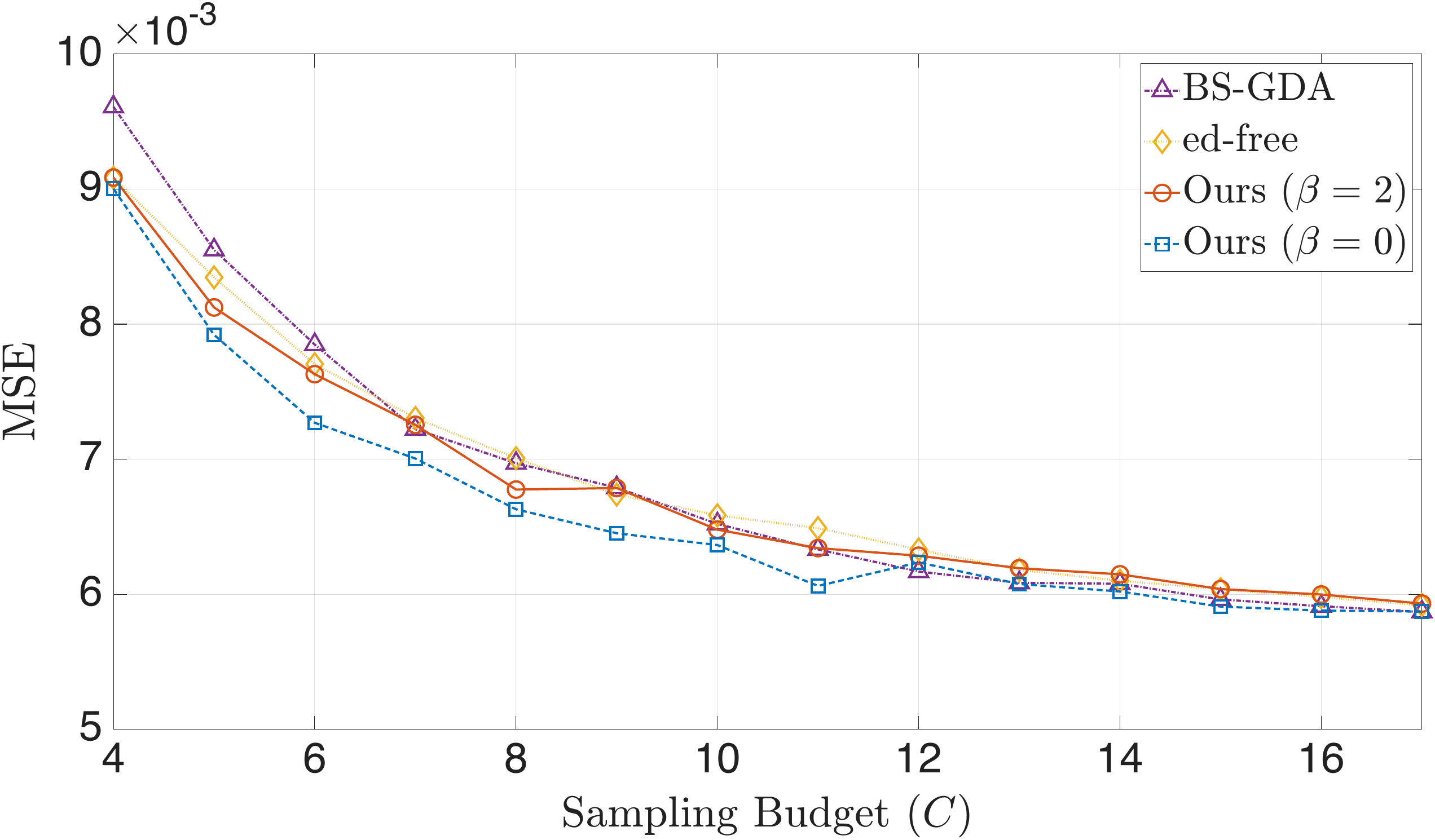}
        }\hfill
    \end{minipage}\\
    \begin{minipage}{0.49\linewidth}
\subfloat[\label{fig:mse-c}]{\includegraphics[width=0.80\linewidth]{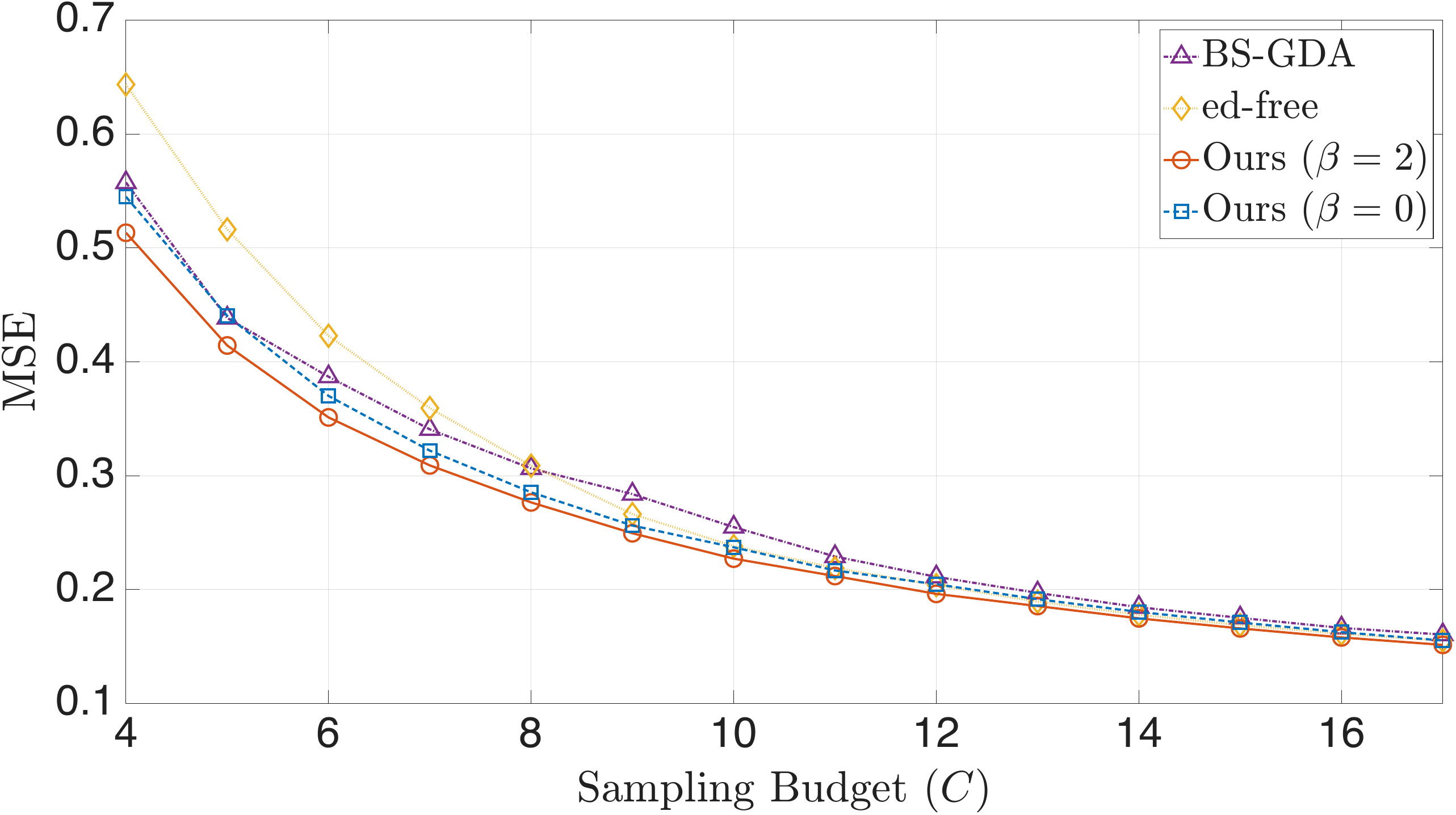}
        }\hfill
    \end{minipage}\begin{minipage}{0.49\linewidth}
\subfloat[\label{fig:mse-d}]{\includegraphics[width=0.80\linewidth]{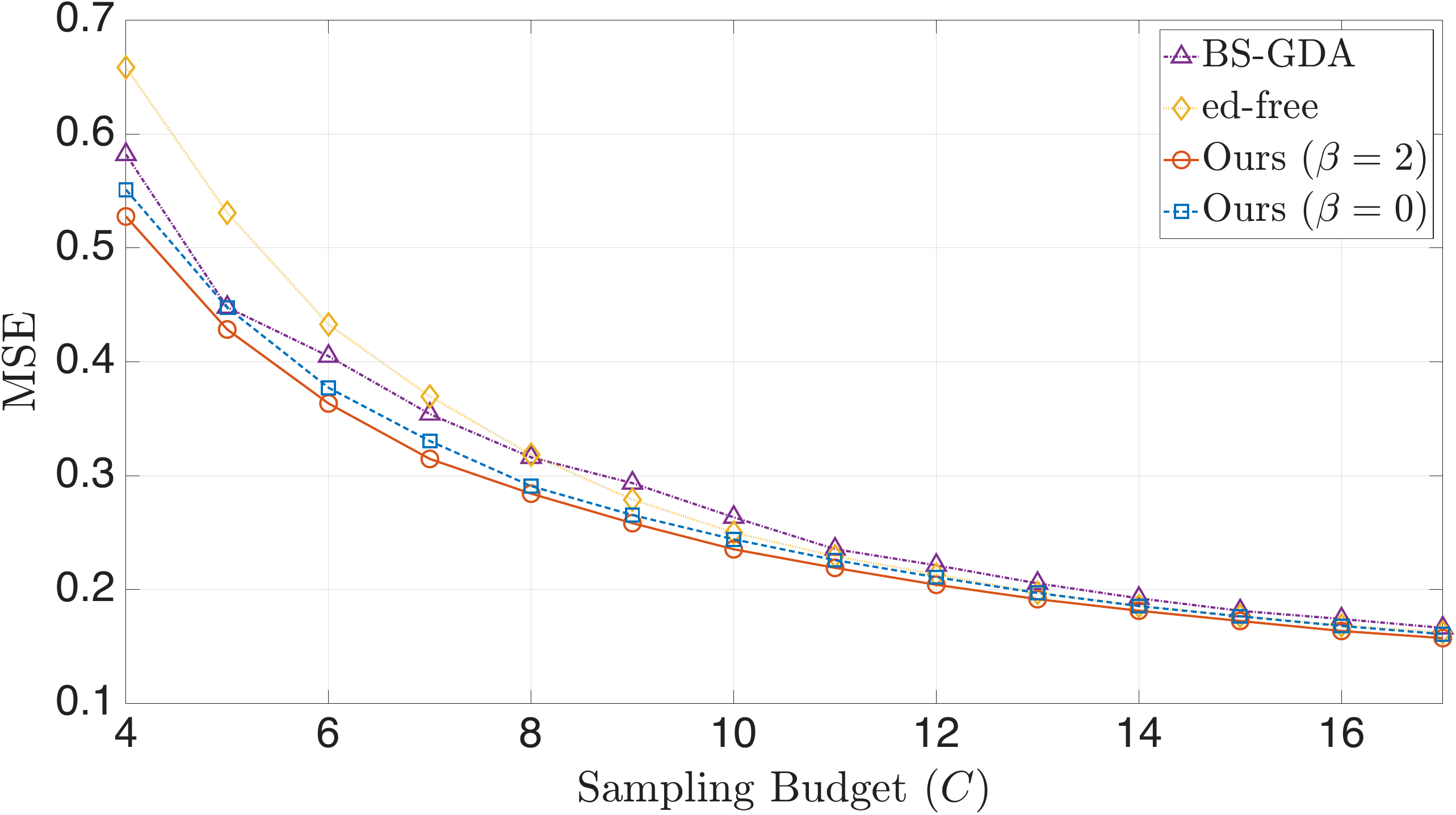}
        }\hfill
    \end{minipage}\\
    \vspace{-0.05in}
    \caption{MSE reconstruction performance of our graph sampling algorithm for different graph signals defined on $2$-EPG graphs using GLR-based signal reconstruction in comparison to \gdas{}~\cite{bai2020fast} and Ed-free~\cite{sakiyama2019edfree}.
        (\ref{fig:mse-a})-(\ref{fig:mse-b}) show results for $\lfloor \frac{N}{20} \rfloor$-BL graph signals with two different noise levels: noise-free and $20$dB, respectively.
        (\ref{fig:mse-c})-(\ref{fig:mse-d}) show results for GMRF graph signals with two different noise levels: noise-free, and $20$dB, respectively.
    }\label{fig:mse_vs_k}
\end{figure*}

We conducted a Monte Carlo experiment on the 25 video graphs, repeating each simulation 100 times.
We tested two noise settings: noise-free signals (Fig.~\ref{fig:mse-a} and \ref{fig:mse-c}), and $20$dB noisy signals (Fig.\;\ref{fig:mse-b} and \ref{fig:mse-d}).

As competing schemes, we used \cite{bai2020fast} and \cite{sakiyama2019edfree} on $M$-EPG graphs.
For our scheme, we first unfolded each EPG graph using one of two unfolding procedures ($\beta=2$ and $\beta=0$).
We then applied the two variants of our algorithm corresponding to unfolded graphs with and without self-loops.
In this experiment, we set the numerical precision of binary search and the regularization parameter in \gdas-based variants to $\epsilon=10^{-9}$ and $\mu=0.05$, respectively.
For Ed-free~\cite{sakiyama2019edfree}, we used the recommended parameters in the experiments in \cite{bai2020fast}, including the Chebyshev polynomial approximation parameter of 12 and the filter width parameter of $\nu=12$.

For BL signals, when the noise level was low, the $\beta=2$ variant, which unfolded the graph to an SPG without self-loops, performed best.
At higher noise levels, the $\beta=0$ variant, which unfolded the graph to an SPG with self-loops, performed best.
In the case of GMRF signals, the $\beta=2$ variant maintained the best performance.

Our two unfolding variants provide different bounds on the GLR term in \eqref{eq:GLR_bound}.
In general, the $\beta=2$ variant offers a tighter upper bound for smooth signals, leading to better performance. Notably, for perfectly smooth signals ($x_i=x_j$ for all $(i,j)\in \cE$), $\x^\top \cL\x = 0$ for $\beta=2$ (thus the tightest upper bound possible), but remains nonzero for $\beta=0$ due to self-loops.
Thus, the $\beta=2$ variant exhibits a slight reconstruction advantage, although both variants outperform competitors.

\subsection{Keyframe Selection Performance}
\label{subsec:VSPerf}

\subsubsection{VSUMM/OVP dataset}\label{vsummovp}

VSUMM \cite{deavila2011vsumm} is the most popular dataset for keyframe-based video summarization in the literature. 
Collected from \textit{Open Video Project}\footnote{\url{https://open-video.org/}} (OVP)~\cite{mundur2006keyframebased}, it consists of 50 videos of duration $1$-$4$ minutes and resolution of $352 \times 240$ in the MPEG-1 format (mainly in 30fps).
The videos are in different genres: documentary, educational, lecture, historical, etc.

Fifty human subjects participated in constructing user summaries, each annotating five videos. 
Thus, the dataset has five sets of keyframes as ground-truth per video, and each set may contain a different number of keyframes. 
Here, we refer to computer-generated keyframes 
as \textit{automatic summary} ($\cA$), and the human-annotated keyframes as \textit{user summary} $\cU_u$, for each user $u \in \{1,\ldots, 5\}$.

For each video, the agreements between $\cA$ and each of the user summaries, $\mathcal{U}_u$, were evaluated using \textit{Precision} ($P_u$), \textit{Recall} ($R_u$), and F$_{1,u}$.
We define the precision, recall, and F$_1$ against each human subject as follows \cite{ma2020similarity,mei2021patcha},
\begin{align*}
	P_{u} = \frac{|\mathcal{A} \cap \mathcal{U}_u|}{|\mathcal{A}|},\quad 
	R_{u} = \frac{|\mathcal{A} \cap \mathcal{U}_u|}{|\mathcal{U}_u|}, \quad
	\mathrm{F}_{1,u} = \frac{2P_u R_u}{P_u+R_u}
\end{align*}
where $|\cA \cap \cU_u|$ is the number of keyframes in $\cA$ matching  $\cU_u$.
$|\cA|$ and $|\cU_u|$ denote the number of selected keyframes in the automatic and user summaries, respectively.
For each video, we measured the mean precision, recall, and F$_1$ across all users. 
The values in Table\;\ref{tab:resvsumm} are the averages of these measures across all videos in the dataset.
Our evaluation follows the \blue{one-to-one keyframe-matching} protocol of VSUMM~\cite{deavila2011vsumm, mei2021patcha}: \blue{each selected keyframe in the automatic summary $\cA$ is matched to at most one ground-truth keyframe in a user summary $\cU_u$ (a bipartite one-to-one matching),} where two frames are deemed to match if they are similar in appearance~\cite{deavila2011vsumm} and within a temporal distance of $2.5$ seconds~\cite{ma2020similarity,mei2021patcha}.
Unlike \cite{ma2022graph,ma2020similarity}, we did not conduct exhaustive searches for optimal keyframe numbers on a per-video basis.
Instead, we employed a fixed, video-invariant sampling budget, $C$, and tuned its value to maximize the F$_1$ score.

For comparison, we followed \cite{deavila2011vsumm,mei2021patcha} and compared our method with DT~\cite{mundur2006keyframebased}, VSUMM~\cite{deavila2011vsumm}, STIMO~\cite{furini2009stimo}, MSR \cite{mei2015video}, AGDS \cite{cong2017adaptive} and SBOMP \cite{mei2021patcha}.
The results for DT, STIMO and VSUMM methods are available from the official VSUMM  website\footnote{\url{http://www.sites.google.com/site/vsummsite/}}.
Since the other algorithm implementations or their automatic summaries are not publicly available, we relied on the best results reported in~\cite{mei2021patcha} for comparison.

\begin{table*}\centering
\caption{Results on VSUMM. Human performance is computed by evaluating each annotator against the remaining four.}\label{tab:resvsumm}
\begin{threeparttable}
\begin{tabular}{llllll}
		\toprule
		Algorithm & $P$ (\%) & $R$ (\%) & F$_1$ (\%) \\
		\midrule
        Human  & $57.15 \pm 14.41$ &  $57.22 \pm 17.00$ &  $55.68 \pm 14.17$ \\
		\midrule
		DT \cite{mundur2006keyframebased} & 35.51 & 26.71 & 29.43 \\
		STIMO \cite{furini2009stimo} & 34.73 & 40.03 & 35.75 \\
		VSUMM$^*$ \cite{deavila2011vsumm} & \textbf{\underline{47.26}} & 42.34 & 43.52 \\ MSR \cite{mei2015video} & 36.94 & 57.61 & 43.39 \\
		AGDS \cite{cong2017adaptive} & 37.57 & 64.60 & 45.52 \\
		SBOMP \cite{mei2021patcha}  & 39.28 & 62.28 & 46.68 \\
		SBOMPn \cite{mei2021patcha} & \textbf{41.23} & 68.47 & \underline{49.70} \\
GDAVS \cite{sahami2022fast}	& 39.67		& 72.48	& 48.92 \\
		\midrule
		\blue{SUM-GAN$^\dagger$~\cite{mahasseni2017unsupervised}}				& \blue{$17.53\pm 8.33$}		& \blue{$31.79\pm 12.59$}	& \blue{$21.68\pm 8.63$} \\
\blue{AC-SUM-GAN$^\dagger$~\cite{apostolidis2021acsumgan}}				& \blue{$18.72\pm 7.70$}		& \blue{$35.04\pm 14.09$}	& \blue{$23.44\pm 8.53$} \\ DPP-LSTM\blue{$^\dagger$}~\cite{zhang2016videob}				& $19.38\pm 10.40$		& $34.15\pm 13.41$	& $23.76\pm 10.66$ \\
		VASNet\blue{$^\dagger$}~\cite{fajtl2019summarizing}				& $17.13\pm 7.30$		& $32.66\pm 14.40$	& $21.53\pm 8.18$ \\
		MSVA\blue{$^\dagger$}~\cite{ghauri2021supervisedb}				& $18.49\pm 8.75$		& $33.91\pm 15.61$	& $23.01\pm 9.97$ \\
		SSPVS\blue{$^\dagger$}~\cite{li2023progressivea}				& $16.93\pm 9.93$		& $30.18\pm 14.22$	& $20.79\pm 10.36$ \\
		CSTA\blue{$^\dagger$}~\cite{son2024cstaa}				& $19.62\pm 7.76$		& $36.80\pm 14.96$	& $24.55\pm 8.51$ \\
		\midrule Ours ({\tiny $M=2$, $\beta=0$, $\mu=0.04$, $\sigma=7.95$}) 				& $36.63\pm 12.37$		& \underline{$\mathbf{83.88\pm 11.61}$}	& $49.06\pm 10.97$ \\ Ours ({\tiny $M=3$, $\beta=0$, $\mu=0.04$, $\sigma=7.95$}) 				& $35.57\pm 12.70$		& $\mathbf{81.25\pm 12.41}$	& $47.56\pm 11.33$ \\ \midrule
		Ours ({\tiny $M=1$, $\mu=0.03$, $\sigma=1.41$}) 				& $39.08 \pm 12.49$		& $71.69 \pm 10.87$	& $48.54\pm 10.06$ \\ Ours ({\tiny $M=2$, $\beta=2$, $\mu=0.03$, $\sigma=1.41$}) 				& \underline{$40.77 \pm 13.24$}		& \underline{$74.16 \pm 8.95$}	& \underline{$\mathbf{50.54 \pm 10.05}$} \\ Ours ({\tiny $M=3$, $\beta=2$, $\mu=0.03$, $\sigma=1.41$}) 				& $40.21\pm 13.67$		& $73.13\pm 9.49$	& $\mathbf{49.81\pm 10.61}$\\ \bottomrule
\end{tabular}
\begin{tablenotes}[flushleft]
\footnotesize
\item[$\dagger$] \blue{Cross-paradigm model designed for keyshot frame-overlap.}
\end{tablenotes}
\end{threeparttable}
\vspace{-0.1in}
\end{table*}

\textit{Adaptation of importance score based models.}
The supervised importance-based models DPP-LSTM~\cite{zhang2016videob}, MSVA~\cite{ghauri2021supervisedb}, CSTA~\cite{son2024cstaa} and SSPVS~\cite{li2023progressivea} generate continuous frame-level importance scores suitable for shot-based evaluations like on TVSum~\cite{yalesong2015tvsuma} and SumMe~\cite{gygli2014creating}.
To incorporate them into our unsupervised keyframe-based protocol (and later YouTube and \ourdataset{}), we use their publicly released pre-trained models, selecting the best-performing checkpoint. We compute per-frame importance scores for each test video.
We then derive a keyframe set by ranking frames and selecting the highest-scoring ones while enforcing a temporal exclusion interval to suppress redundant adjacent selections.
For SSPVS, we note that its released pre-trained model includes OVP and YouTube data as part of its augmented training, and therefore it is not strictly unsupervised in contrast to the other methods.

For VASNet, the pre-trained model referenced in the original repository is no longer accessible.
Consequently, we train VASNet from scratch following the standard hyper-parameter configuration provided in the official implementation.
We train the network using the Adam optimizer with a learning rate of $5 \times 10^{-5}$ and $L_{2}$ regularization (weight decay) of $1 \times 10^{-5}$.
Training is performed for 300 epochs with a batch size of~1 while processing full video sequences.

\blue{We further compare against the unsupervised adversarial summarizers SUM-GAN~\cite{mahasseni2017unsupervised} and the more recent AC-SUM-GAN~\cite{apostolidis2021acsumgan}.
As SUM-GAN has no official implementation, we adopt a widely used public re-implementation\footnote{\url{https://github.com/j-min/Adversarial_Video_Summary}}; its selector network outputs frame-level importance scores, which we convert to keyframes as above.
Both are trained for 100 epochs, except SUM-GAN on \ourdataset{}, which requires 500 epochs to converge.
For AC-SUM-GAN, we report results from the epoch identified by its label-free criterion under the best-performing length regularization parameter.
}

Table\;\ref{tab:resvsumm} shows the experimental results, where the numbers in \underline{\textbf{underlined bold}}, \textbf{bold}, and \underline{underlined} indicate the top three performances.
For our method, we report the standard deviation across runs on each dataset. 
For competing methods, such statistics are not provided in the literature; hence, we report their published mean values only.
The significantly higher precision of the VSUMM method~\cite{deavila2011vsumm} can be attributed to its preprocessing steps, which eliminate low-quality frames, and its post-processing pruning, which removes very similar selected frames--techniques not employed by other methods.
SBOMP~\cite{mei2021patcha} is a recently proposed video summarization method with SOTA performance and relies on sub-frame representation by extracting features for each image patch collected in a \textit{matrix block}.
This results in a complex representation for video frames in their dictionary learning-based solution, increasing the algorithm's complexity---$\cO(Cp^2\ell N^2 + p^3 C^3 N)$, where $\ell$ is the dimension of feature vector, $p$ is the number of patches per frame, and $N,C$ are the number of frames and keyframes, respectively\footnote{We assume convergence within constant $t\ll N$ (independent of $N,C$).}.
While $\ell$ is fixed, the number of keyframes is typically proportional to the number of frames, i.e., $C \propto N$.
This effectively increases the complexity to $\cO(N^4)$.
The SBOMPn version extends the representation to the temporal dimension by considering \textit{super-blocks} of patches from adjacent frames, thereby increasing the complexity.

In contrast, our graph sampling complexity is $\cO(N \log_2 \frac{1}{\epsilon})$, where $\epsilon$ is the binary search precision.
Our method surpassed SBOMP and its more complex variant SBOMPn, achieving comparable performance at significantly reduced complexity.

\subsubsection{YouTube Dataset}\label{youtube}

YouTube~\cite{deavila2011vsumm} serves as a dataset comparable to VSUMM/OVP, specifically designed for keyframe-based video summarization. 
The 50 videos, all sourced from YouTube, span genres like cartoons, news, commercials, and TV shows. 
Their durations range from 1 to 10 minutes.

\begin{table*}\centering
\caption{Results on YouTube. Human performance is computed by evaluating each annotator against the remaining four.}\label{tab:youtube}
\begin{threeparttable}
\begin{tabular}{lllll}
			\toprule
			Algorithm & $P$ (\%) & $R$ (\%) & F$_1$ (\%) & $\overline{C}$ \\
			\midrule
			Human &  $49.01\pm 19.31$ & $49.24\pm 19.39$ & $47.52\pm 19.35$ & 10.7 \\
			\midrule
			VSUMM$^*$ \cite{deavila2011vsumm} &  \underline{\textbf{43.11}} & 43.56 & \textbf{42.38} & 10.3 \\
			\blue{MSR~\cite{mei2015video}} & \blue{39.27} & \blue{49.09} & \blue{39.23} & \blue{14.3} \\
			SMRS \cite{elhamifar2012see} & 33.84 & 54.92 & 39.62 & 17.3 \\
			SSDS \cite{wang2017representative} & 33.45 & 47.10 & 36.98 & 13.0 \\
			AGDS \cite{cong2017adaptive} & 34.00 & \textbf{58.08} & 40.65 & 16.0 \\
			NSMIS~\cite{dornaika2018instance} & 35.61 & \underline{\textbf{59.95}} & 41.27 & 18.7 \\
			GCSD \cite{ma2022graph}  &  \underline{37.28} & 54.33 & \underline{42.20} & 13.7 \\
			\midrule
            \blue{SUM-GAN$^\dagger$~\cite{mahasseni2017unsupervised}} & \blue{$19.24\pm 10.58$}		& \blue{$29.59\pm 18.70$}	& \blue{$22.07\pm 12.81$} & \blue{13.7} \\
\blue{AC-SUM-GAN$^\dagger$~\cite{apostolidis2021acsumgan}} & \blue{$21.88\pm 14.28$}		& \blue{$31.21\pm 23.58$}	& \blue{$23.84\pm 16.31$} & \blue{12.5} \\ DPP-LSTM\blue{$^\dagger$}~\cite{zhang2016videob} & $20.97\pm 12.18$		& $29.59\pm 20.26$	& $23.00\pm 13.54$ & 12.5 \\
            VASNet\blue{$^\dagger$}~\cite{fajtl2019summarizing} & $22.05\pm 12.38$		& $28.65\pm 17.49$	& $23.29\pm 12.81$ & 11.7 \\
            MSVA\blue{$^\dagger$}~\cite{ghauri2021supervisedb} & $21.80\pm 13.63$		& $29.46\pm 19.64$	& $23.89\pm 15.29$ & 12.0 \\
            SSPVS\blue{$^\dagger$}~\cite{li2023progressivea} & $20.98\pm 10.95$		& $28.50\pm 19.96$	& $22.57\pm 13.09$ & 11.7 \\
            CSTA\blue{$^\dagger$}~\cite{son2024cstaa} & $27.41\pm 12.76$		& $33.72\pm 19.70$	& $28.04\pm 13.95$ & 11.1 \\ 
            \midrule
			SBOMP~\cite{mei2021patcha} ({\tiny $P=1$, $E=0.05$})			& $32.09 \pm 16.03$		& $44.06 \pm 24.62$	& $35.45 \pm 17.97$ & 13.4 \\
SBOMP~\cite{mei2021patcha} ({\tiny $P=5$, $E=0.05$})			& $32.81 \pm 19.62$		& $45.99 \pm 24.98$	& $36.52\pm 20.62$ & 14.3 \\
			\midrule
Ours ({\tiny $M=2$, $\beta=0$, $\mu=0.25$, $\sigma=6$}) 				& $36.87\pm 19.50$		& $51.11\pm 24.95$	& $40.89\pm 20.43$ & 13.9 \\ Ours ({\tiny $M=2$, $\beta=2$, $\mu=0.006$, $\sigma=6$}) 				& $\mathbf{37.62 \pm 19.36}$		& \underline{$56.97 \pm 26.09$}	& \underline{$\mathbf{42.99 \pm 20.60}$} & 15.0 \\ \bottomrule
\end{tabular}
\begin{tablenotes}[flushleft]
\footnotesize
\item[$\dagger$] \blue{Cross-paradigm model designed for keyshot frame-overlap.}
\end{tablenotes}
\end{threeparttable}
\vspace{-0.1in}
\end{table*}

Unlike VSUMM~\cite{deavila2011vsumm}, the YouTube dataset is less commonly used in the literature.
In this experiment, we utilized the SOTA approach by \cite{ma2022graph}, which has recently demonstrated promising results on the dataset.
In contrast to our previous experiment, where we employed a fixed budget ratio for all videos, according to the protocol in \cite{ma2022graph}, the optimal keyframe number for each video is exhaustively searched; specifically, we searched the optimal keyframe number for each video to maximize F$_1$, varying from 6 to 20 with a step size of 2, as done in \cite{ma2022graph}.
The evaluation process mirrors the \blue{one-to-one keyframe-matching} protocol of Section\;\ref{vsummovp}, encompassing precision, recall, and F$_1$ measure.
Further, the benchmark yields the average budget (number of keyframes) across all videos, denoted as $\overline{C}$.
Consistent with \cite{ma2022graph}, we set the temporal matching criterion to 2 seconds.

Table~\ref{tab:youtube} summarizes the evaluation results, including parameter configurations.
For SBOMP~\cite{mei2021patcha}, we used the official implementation with two patching strategies: a single patch for the entire image and the $4+1$ patching strategy from the paper.
Following the recommended strategy, we generated sub-frame features for each patch and performed a parameter sweep on error thresholds (E) to report the best results.
Notably, \cite{ma2022graph} builds upon \cite{ma2020similarity} by incorporating an adjacency matrix into dictionary selection, increasing computational complexity, similar to \cite{mei2021patcha}.
For YouTube, we report standard deviations for our method, the SBOMP baseline, and \blue{the deep-learning baselines}, while other results are taken as reported in prior work.
Our method achieved superior performance with lower complexity, even outperforming VSUMM$^*$\cite{deavila2011vsumm}, which relies on preprocessing to filter out low-quality frames and post-processing to prune redundant keyframes.

\subsubsection{\ourdataset{} Dataset}\label{mhsd}

Creating datasets for video analysis is challenging due to its labor-intensive and time-consuming nature~\cite{apostolidis2021video}.
Current datasets were largely focused on longer and dynamic video summarization tasks, often neglecting the distinctive traits of short-form content prevalent on online platforms~\cite{zhang2023measurement}.
To bridge this gap, we developed a new dataset, \ourdataset{}\footnote{\ourdatasetinfull{}}, specifically for transitory video summarization, with a primary emphasis on short-form content.

\vspace{0.05in}
\noindent
\textbf{Dataset Collection}:
For this study, we gathered videos from two major video-sharing platforms, Vimeo and YouTube, using their respective filtering tools that support licensing criteria.
Our selection query criteria were informed by prior research on popular video categories such as \textit{Entertainment}, \textit{DIY}, \textit{Comedy}, etc.~\cite{cheng2013understanding,mcclanahan2017interplay}, aiming to mirror their content distribution.
To facilitate research and ensure ease of use, we focused on videos licensed under \textit{Creative Commons} and its derivatives.
Additionally, we prioritized videos based on popularity, determined by view counts.
Our dataset consists of videos with a duration ranging from a minimum of $45$ seconds to a maximum of $4$ minutes, with an average duration of $110$ seconds.

\vspace{0.05in}
\noindent
\textbf{Annotation}:
We annotate a dataset of $117$ videos, more than doubling the size of both VSUMM and YouTube datasets. 
Annotation was conducted using the Labelbox platform\footnote{https://labelbox.com}, an online tool designed for data annotation.
To maintain consistency in annotations, all three annotators labeled the entire dataset once.
Annotators received initial training through the platform before proceeding with annotation tasks.
Each annotator watched each video in its entirety and selected up to 7 keyframes per video.

\begin{table}\centering
\caption{Results on \ourdataset{}. Human performance is computed by evaluating each annotator against the remaining two.}\label{tab:mhsd}
\begin{threeparttable}
\begin{tabular}{lllll}
			\toprule
			Algorithm & $P$ (\%) & $R$ (\%) & F$_1$ (\%) & $\overline{C}$ \\
			\midrule
			Human Best & 51.62 & 42.95 & 45.44 & 3.9 \\ \midrule
			VSUMM~\cite{deavila2011vsumm} & 31.89 & 50.50 & 37.30 & 7.9 \\ SMRS~\cite{elhamifar2012see} ({\tiny $\alpha=2$, $\tau=10^{-8}$}) & 30.18 & 43.66 & 33.59 & 7.7 \\ SBOMP~\cite{mei2021patcha} ({\tiny $p=1$, $E=0.05$}) & 30.11 & 43.53 & 33.06 & \textbf{7.6} \\ SBOMP~\cite{mei2021patcha} ({\tiny $p=5$, $E=0.05$}) & 30.75 & 43.78 & 33.80 & \underline{7.7} \\ \midrule
            \blue{SUM-GAN$^\dagger$~\cite{mahasseni2017unsupervised}} 		& \blue{$21.44$}		& \blue{$39.14$}	& \blue{$26.69$} & \blue{8.5} \\
\blue{AC-SUM-GAN$^\dagger$~\cite{apostolidis2021acsumgan}} 		& \blue{$35.10$}		& \blue{$45.60$}	& \blue{$36.15$} & \blue{7.1} \\ DPP-LSTM\blue{$^\dagger$}~\cite{zhang2016videob} 		& $26.76$		& $42.43$	& $31.10$ & 8.0 \\
            VASNet\blue{$^\dagger$}~\cite{fajtl2019summarizing} 		& $28.85$		& $42.20$	& $32.45$ & 7.7 \\
            SSPVS\blue{$^\dagger$}~\cite{li2023progressivea} 		& $33.77$		& $45.07$	& $35.68$ & 7.5 \\
            CSTA\blue{$^\dagger$}~\cite{son2024cstaa} & $30.14$		& $47.41$	& $35.59$ & 7.8 \\
\midrule
			Ours ({\tiny $M=1$, $\mu=0.2$, $\sigma=2$}) & \textbf{39.79} & \underline{57.91} & \underline{44.91} & \underline{\textbf{7.5}} \\ Ours ({\tiny $M=2$, $\beta=0$, $\mu=0.2$, $\sigma=4$}) & \underline{38.81} & \textbf{58.74} & \textbf{45.02} & \underline{7.7} \\ Ours ({\tiny $M=2$, $\beta=2$, $\mu=0.2$, $\sigma=4$}) & \underline{\textbf{40.32}} & \underline{\textbf{59.24}} & \underline{\textbf{45.37}} & \underline{\textbf{7.5}} \\ 

			\bottomrule
\end{tabular}
\begin{tablenotes}[flushleft]
\footnotesize
\item[$\dagger$] \blue{Cross-paradigm model designed for keyshot frame-overlap.}
\end{tablenotes}
\end{threeparttable}
\vspace{-0.1in}
\end{table}

\vspace{0.05in}
\noindent
\textbf{Results}:
Table\;\ref{tab:mhsd} presents the performance evaluation of various algorithms on the \ourdataset{} dataset.
Except for \cite{mei2021patcha}, which provides its own implementation, we implemented all other methods for comparison on our new dataset.
All methods use the same CLIP-based feature set $\left\{\f_i\right\}_{i=1}^N$ (see Section~\ref{sec:graph}) except for SBOMP~\cite{mei2021patcha} ($p=5$), where features are generated for each sub-frame patch.
The evaluation protocol followed the guidelines in Section~\ref{youtube}, but we limited the keyframe range to 1 to 11 frames due to the shorter content and fewer annotations.

We report the best performance for each competing scheme after parameter tuning.
For SMRS~\cite{elhamifar2012see}, $\alpha$ and $\tau$ denote the regularization and error tolerance, respectively, with the ADMM optimization iterating 2000 times.
For SBOMP~\cite{mei2021patcha}, $p$ denotes the number of patches based on their recommended strategy.
In this table, VSUMM omits pre- and post-pruning for a fair comparison, unlike VSUMM$^*$ in previous experiments.
Table~\ref{tab:mhsd} summarizes the results and respective parameters, highlighting the top three performances in \underline{\textbf{underlined bold}}, \textbf{bold}, and \underline{underline}.
Our method, with parameters $M=2$ and $\beta=2$, achieved an F$_1$ score of 45.37\%, demonstrating its effectiveness in capturing keyframes in short videos.

\subsection{Ablation Study}\label{subsec:ablation}

The selection of $M$-EPG structure for video representation is grounded in the observation that consecutive video frames exhibit a high degree of similarity, a characteristic central to EPG.
Importantly, this sparse graph structure enables a more efficient sampling algorithm to select keyframes.
Given that it is possible for (short) videos to contain information with long-term dependencies, we performed an ablation study to validate the effectiveness of our chosen combination of EPG construction and the specialized graph sampling algorithm in processing such videos.
The results of different combinations of graph construction (GC) and graph sampling (GS) algorithms are presented in Table\;\ref{tab:ablation1}.

The similarity metric between respective frame feature vectors was computed using our edge computation in \eqref{eq:edgeWeight} with parameter $\sigma=6$.
For both our algorithm and \gdas{}~\cite{bai2020fast}, the precision parameter was set to $\epsilon=10^{-9}$.
This ablation study used the VSUMM dataset and followed the evaluation protocol, including the sampling budget, as outlined in Section\;\ref{vsummovp}.
With $\sigma$, the budget, and $\epsilon$ held fixed across all algorithms, the remaining parameters (shown in Table\;\ref{tab:ablation1}) were tuned by grid search per algorithm, \blue{with each competing method's best F$_1$ reported.
For our method, we instead fix a single $\mu=0.08$ across all $M$, so that the third block isolates the effect of the EPG order $M$;
these numbers are thus not our best, yet our sampler still surpasses the tuned \gdas{}~\cite{bai2020fast} at every $M$.}

\begin{table}\centering \caption{GC/GS Ablation Study ($\sigma=6$, $\epsilon=10^{-9}$)
}\label{tab:ablation1}
\resizebox{1.00\linewidth}{!}{\centering \begin{tabular}{lllll}
			\toprule
			GC& GS& Complexity & F$_1$(\%) \\
			\midrule
			$\tau$-graph ({\tiny$\tau=0.01$, $\mu=0.007$}) & \cite{bai2020fast} & $\mathcal{O}(N^2\ell)$ & $42.40$ \\
			$k$-NN ({\tiny$k=25$, $\mu=0.003$})  & \cite{bai2020fast}  & $\mathcal{O}(N^2k\ell)$ & $42.69$ \\
NNK~\cite{shekkizhar2020graph} ({\tiny$k=50$, $\delta=0.01$, $\mu=0.006$})  & \cite{bai2020fast} & $\mathcal{O}(N^2k\ell+Nk^3)$ & $43.59$ \\
			\midrule
			$1$-EPG ({\tiny$\mu=0.08$}) & \cite{bai2020fast} & $\cO(N\ell + NCP\log_{2}1/\epsilon)$ & $48.55$ \\
			$2$-EPG ({\tiny$\mu=0.08$}) & \cite{bai2020fast} & $\cO(N\ell + NCP\log_{2}1/\epsilon)$ & $49.13$ \\
			$3$-EPG ({\tiny$\mu=0.08$}) & \cite{bai2020fast} & $\cO(N\ell + NCP\log_{2}1/\epsilon)$ & $49.21$ \\
			$4$-EPG ({\tiny$\mu=0.08$}) & \cite{bai2020fast} & $\cO(N\ell + NCP\log_{2}1/\epsilon)$ & $47.92$ \\
\midrule
			$1$-EPG ({\tiny$\mu=\blue{0.08}$})  & Ours & $\mathcal{O}(N\ell + N\log_21/\epsilon)$ & \blue{$50.36$} \\
			$2$-EPG ({\tiny$\beta = 2$, $\mu=\blue{0.08}$})  & Ours & $\mathcal{O}(N\ell + N\log_21/\epsilon)$ & \blue{$50.65$} \\
			$\blue{3}$-EPG ({\tiny$\beta = 2$, $\mu=\blue{0.08}$})  & Ours & $\mathcal{O}(N\ell + N\log_21/\epsilon)$ & \blue{$49.84$} \\
			$\blue{4}$-EPG ({\tiny$\beta = \blue{2}$, $\mu=\blue{0.08}$})  & Ours & $\mathcal{O}(N\ell + N\log_21/\epsilon)$ & \blue{$49.91$} \\
			\bottomrule \end{tabular}\vspace{-0.05in}
}\end{table}

Our sampling algorithm only operates on unfolded graphs; for general graphs, we employed the method described in \cite{bai2020fast}. 
In the first section, we present the results of \gdas{}~\cite{bai2020fast} sampling on general graph construction methods such as $\epsilon$-thresholding, the $k$-NN method, and NNK~\cite{shekkizhar2020graph}. 
These methods typically produce much denser graphs compared to our sparse graph construction, especially with $\tau$-thresholding.
The second section shows the results of \gdas{} applied to our $M$-EPG graph construction scenario. 
Finally, the third section provides the results of our unfolding and sampling applied to the $M$-EPG graph.

The comparison in the first section highlights the effectiveness of our graph construction.
Moreover, the contrast between the second and third sections shows that our new graph sampling algorithm achieved better results for video summarization.

Further, our $M$-EPG graph construction operates with $\cO(MN\ell)$ complexity, and our sampling technique runs in $\cO(N\log_2\frac{1}{\epsilon})$ time, offering improved efficiency in both time (and memory) compared to competitors.
In Table\;\ref{tab:ablation1}, $N$ denotes the number of nodes, and $C$ represents the number of samples.
For \cite{bai2020fast}, $P$ denotes the number of nodes within $p$ hops from a candidate node (note $P=2M$ for \gdas{}).
In the NNK and $k$-NN graph constructions, $k$ signifies the nearest neighbor parameter\footnote{In the literature, an $\cO(N^{1.14})$ approximation of the $k$-NN graph construction exists~\cite{dong2011efficient}, which can benefit both NNK and KNN complexity~\cite{shekkizhar2020graph}.
}.
For the results in the first section of the table, we omit the sampling complexity since the graph construction dominates.

Finally, we conducted an ablation study on several off-the-shelf feature extractors (Table~\ref{tab:ablation_embed}). 
The results show that CLIP (ViT-B/32) yields the best precision, recall, and F$_1$ score. 
Nevertheless, our graph sampling framework is feature-agnostic, as the construction of the similarity graph can be based on any representative semantic descriptor. 
Thus, the method readily adapts to alternative feature choices without modification.

\begin{table}[tbhp]\centering
\caption{Ablation on Choices of Feature Extractor -- VSUMM Dataset. \\ 
 ({\small $\sigma=6/\sqrt{2},\: \text{Budget}=10.4\%, \: \mu=0.03, \: \epsilon=10^{-6}$}) 
}\label{tab:ablation_embed}
\begin{tabular}{lcccc}
    \toprule
    Network & Feature Dim. & $P$ (\%) & $R$ (\%) & F$_1$ (\%) \\
    \midrule
    CLIP (ViT-B/32)~\cite{radford2021learning} & 512 & 37.90 & 86.25 & 50.67 \\
    ResNet-50~\cite{he2016deep} & 2048 & 37.05 & 84.76 & 49.60 \\
    ResNet-18~\cite{he2016deep} & 512 & 36.89 & 83.87 & 49.28 \\
    SqueezeNet v1.1~\cite{iandola2016squeezeNet} & 512 & 36.90 & 84.33 & 49.34 \\
    MobileNet-v2~\cite{sandler2018mobilenetv2} & 1280 & 36.67 & 83.96 & 49.11 \\
    DenseNet-201~\cite{huang2017densely} & 1920 & 37.72 & 85.42 & 50.33 \\
    GoogLeNet~\cite{szegedy2015going}  & 1024 & 37.55 & 84.96 & 50.10 \\
    \bottomrule
\end{tabular}
\vspace{-0.15in}
\end{table}

\subsection{Run-Time Analysis
}\label{subsec:run-time}

Fig.\,\ref{fig:RunTime} shows the measured runtime (in seconds) of our method compared to publicly available implementations of SMRS~\cite{elhamifar2012see} and SBOMP~\cite{mei2021patcha}.  
All experiments were conducted on a MacBook with an M1 chip, 16GB RAM, and macOS 15.5 using Matlab R2025a.  
Our method has two variants, with self-loops ($\beta=0$) and without ($\beta=2$), both showing comparable runtime.  
Total runtime consists of three components: graph construction, unfolding, and sampling.

For evaluation, we fixed the sampling budget to 10\% of the total frames.  
For each video length, the experiments were repeated 15 times, and we report the averaged runtime.  
The x-axis in Fig.~\ref{fig:RunTime} denotes the number of video frames, and the y-axis shows the measured runtime in seconds.

\begin{figure}[tb]
\begin{center}
\includegraphics[width=0.80\linewidth]{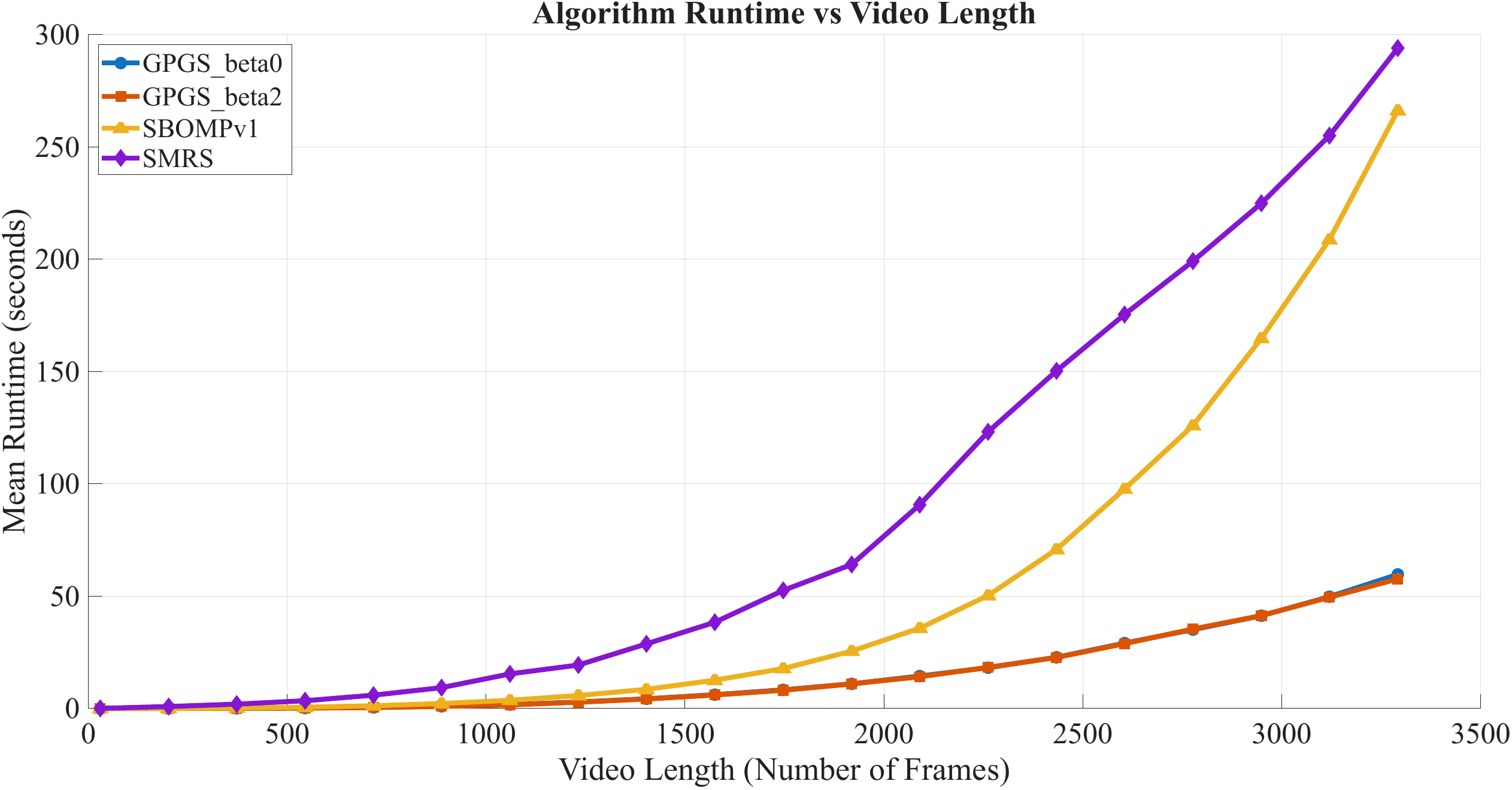}
\end{center}
\caption{Run‐time vs.\ number of frames, comparing our method to competitors.
}
\label{fig:RunTime}
\end{figure}

\subsection{Sensitivity Analysis}\label{subsec:sensitivity}

We evaluated robustness by sweeping $\mu$, $\sigma$, $\epsilon$, and the sampling budget over the ranges used in our experiments.
Fig.\,\ref{fig:sensitivity} shows Precision, Recall, and F$_1$ as each parameter varies while the remaining parameters are fixed to their tuned values.
Performance exhibits only modest variation within these reasonable ranges, indicating that the method is not highly sensitive to parameter tuning.
\begin{figure}[tbp]
\begin{center}
\includegraphics[width=0.98\linewidth]{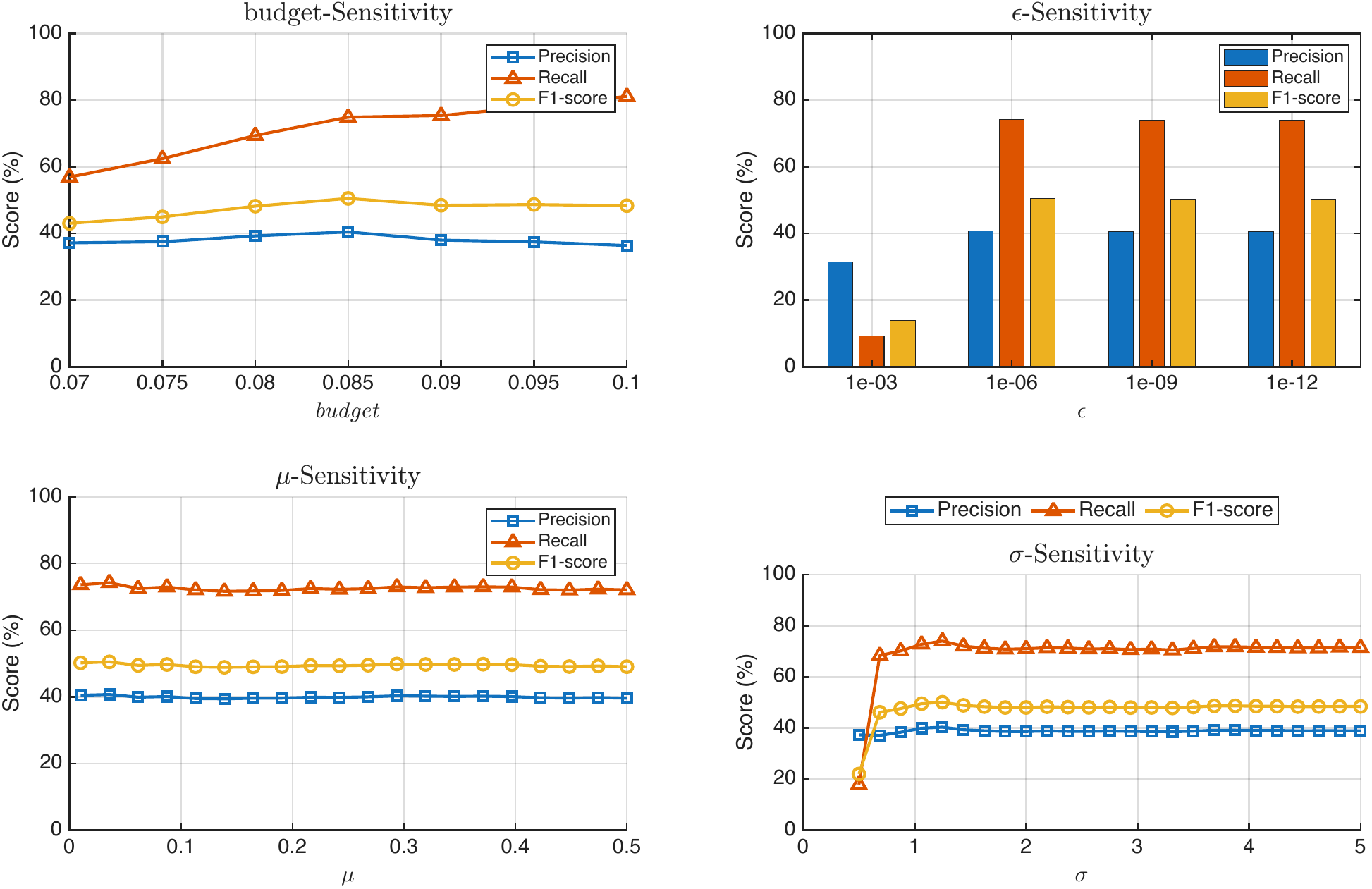}
		\caption{Sensitivity analysis of Precision, Recall, and F$_1$ versus $\mu$, $\sigma$, $\epsilon$, and budget on VSUMM.
		}\label{fig:sensitivity}
\end{center}
\end{figure}

\subsection{Statistical Significance}\label{subsec:statistical_significance}

We assess statistical significance on the MHSD dataset, where per-video results are available. 
Table~\ref{tab:statistical_significance} reports three complementary metrics: 99\% confidence intervals (CIs) of paired F$_1$-score differences (paired $t$-intervals), $p$-values from Wilcoxon signed-rank tests~\cite{hollander2013nonparametric}, and Cohen's $d$ effect sizes~\cite{cohen2013statistical}.
All pairwise comparisons yield $p < 10^{-12}$ and $d > 0.8$, indicating statistically significant improvements with large practical effects.
The entirely positive CIs confirm that our method consistently outperforms all baselines on MHSD.

\begin{table}[t]
\centering
\caption{Pairwise significance on MHSD (99\% CI, Wilcoxon $p$, Cohen's $d$); all comparisons are significant at $\alpha=0.01$ with large effects ($d>0.8$).}
\label{tab:statistical_significance}
\setlength{\tabcolsep}{4pt}
\begin{tabular}{lccc}
\toprule
Method Comparison & 99\% CI & $p$-value & Cohen's $d$ \\
\midrule
Ours vs.\ VSUMM~\cite{deavila2011vsumm}   & $[0.058, 0.104]$ & $3.60 \times 10^{-13}$ & $0.86$ \\
Ours vs.\ SMRS~\cite{elhamifar2012see}    & $[0.088, 0.148]$ & $2.30 \times 10^{-14}$ & $0.95$ \\
Ours vs.\ SBOMP~\cite{mei2021patcha}      & $[0.096, 0.150]$ & $2.91 \times 10^{-16}$ & $1.09$ \\
Ours vs.\ SSPVS~\cite{li2023progressivea} & $[0.070, 0.123]$ & $1.68 \times 10^{-13}$ & $0.88$ \\
Ours vs.\ CSTA~\cite{son2024cstaa}        & $[0.069, 0.127]$ & $1.80 \times 10^{-12}$ & $0.81$ \\
\bottomrule
\end{tabular}
\end{table}

\section{Conclusion}\label{sec:conclude}

We solve the keyframe selection problem to summarize a transitory video from a unique graph sampling perspective.  
Specifically, we first represent an $N$-frame transitory video as an $M$-hop path graph $\cG^o$, where $M \ll N$, and weight $w^o_{i,j}$ of an edge $(i,j)$ connecting two nodes (frames) within $M$ time instants is computed using feature distance between the corresponding feature vectors $\f_i$ and $\f_j$.
We unfold $\cG^o$ into a $1$-hop path graph $\cG$, specified by Laplacian $\cL$, via one of two analytically derived graph unfolding procedures to lower graph sampling complexity.  
Given graph $\cG$, we devise a linear-time algorithm that selects sample nodes specified by binary vector $\h$ to maximize the lower bound of the smallest eigenvalue $\lambda_{\min}^-(\B)$ for a coefficient matrix $\B = \diag{\h} + \mu \cL$.
Experiments show that our proposed method achieved comparable performance to SOTA methods at a reduced complexity.

\appendices
\section{Proof of Lemma~\ref{lemma:gctsl}}
\label{appendix:lemmaproof}

\begin{proof}
Define $\S^d = \diag{s_1^l,\ldots,s_{k-1}^l,s_k^u,\ldots,s_d^u}$, with demand scalars $s_i^l$ on the upstream nodes $i<k$ and supply scalars $s_i^u$ on the sample and downstream nodes $i \geq k$.
We verify that every Gershgorin disc left-end of $\S^d (\diag{\e_k} + \mu \cL^d)(\S^d)^{-1}$ lies at or beyond $T$.
Step $(a)$ below uses $s_i^l \geq 1$ at upstream non-sample nodes (the upstream stopping rule) and $s_j^u \geq 1$ at covered downstream nodes.

For sample node $k$, the disc left-end is
\begin{align}
& \cL_{k,k} + 1 - s_k^u \left( (s_{k-1}^l)^{-1} W_{k,k-1} + (s_{k+1}^u)^{-1} W_{k,k+1} \right)
\nonumber \\
\stackrel{(a)}{\geq}& \cL_{k,k} + 1 - s_k^u \left( (s_{k-1}^l)^{-1} W_{k,k-1} + W_{k,k+1} \right)
\stackrel{(b)}{=} T ,
\end{align}
where $(a)$ uses $s_{k+1}^u \geq 1$ and $(b)$ the definition of $s_k^u$ in \eqref{eq:su}.

For an upstream non-sample node $i \in \{2,\ldots,k-1\}$, the disc left-end is
\begin{align}
& \cL_{i,i} - s_i^l \left( (s_{i-1}^l)^{-1} W_{i,i-1} + (s_{i+1}^l)^{-1} W_{i,i+1} \right)
\nonumber \\
\stackrel{(a)}{\geq}& \cL_{i,i} - s_i^l \left( W_{i,i-1} + (s_{i+1}^l)^{-1} W_{i,i+1} \right)
\stackrel{(b)}{=} T ,
\end{align}
where $(a)$ uses $s_{i-1}^l \geq 1$ and $(b)$ the definition of $s_{i+1}^l$ in \eqref{eq:sl}.
The boundary $i = 1$, with $s_1^l = 1$ and $W_{1,0} = 0$, gives $\cL_{1,1} - (s_2^l)^{-1} W_{1,2} = T$.

The downstream nodes $\{k+1,\ldots,d\}$ follow by symmetry: supply scalars $s_i^u$ replace the demand scalars, and step $(a)$ drops the not-yet-expanded downstream neighbor via $s_{i+1}^u \geq 1$, so \eqref{eq:sd} pins the left-end at $T$ for each $i \in \{k+1,\ldots,d-1\}$.
At the boundary $i = d$ (likewise $d=k$), edge $(d,d+1)$ is absent from $\cL^d$, leaving left-end $\cL_{d,d} - s_d^u (s_{d-1}^u)^{-1} W_{d,d-1} = T + s_d^u W_{d,d+1} \geq T$.

Since all disc left-ends for $i \in \{1,\ldots,d\}$ lie at or beyond $T$, $\lambda^-_{\min}(\S^d (\diag{\e_k} + \mu \cL^d)(\S^d)^{-1}) \geq T$.
\end{proof}
 
\bibliographystyle{IEEEtran}

\end{document}